\RequirePackage{fix-cm}
\documentclass[cleveref,12pt,noabbrev,capitalize]{jmlrXiv} %

\usepackage{times}
\usepackage{enumitem}
\usepackage{algorithm}
\usepackage{xfrac}
\usepackage[clean]{macro/main}

\title[Adversarial Learning in Games with Bandit Feedback]{Adversarial Learning in Games with Bandit Feedback: \\ Logarithmic Pure-Strategy Maximin Regret}

\confauthor{%
 \Name{Shinji Ito} \Email{shinji@mist.i.u-tokyo.ac.jp}\\
 \addr The University of Tokyo and RIKEN
 \AND
 \Name{Haipeng Luo} \Email{haipengl@usc.edu}\\
 \addr University of Southern California
 \AND
 \Name{Arnab Maiti} \Email{arnabm2@uw.edu}\\
 \addr University of Washington
 \AND
 \Name{Taira Tsuchiya} \Email{tsuchiya@mist.i.u-tokyo.ac.jp}\\
 \addr The University of Tokyo and RIKEN
 \AND
 \Name{Yue Wu} \Email{wu.yue@usc.edu}\\
 \addr University of Southern California
}

\begin{document}

\maketitle
\ifanonsubmission\else
\blfootnote{Authors are listed in alphabetical order.}
\fi

\begin{abstract}%
Learning to play zero-sum games %
is a fundamental problem in game theory and machine learning.
While significant progress has been made in minimizing external regret in the self-play settings or with full-information feedback,
real-world applications often force learners to play against unknown, arbitrary opponents
and restrict learners to bandit feedback where only the reward of the realized action is observable.
In such challenging settings, it is well-known that $\Omega(\sqrt T)$ external regret is unavoidable (where $T$ is the number of rounds).
To overcome this barrier, we investigate
adversarial learning in zero-sum games under bandit feedback,
aiming to minimize the deficit against the maximin pure strategy --- a metric we term Pure-Strategy Maximin Regret.

We analyze this problem under two bandit feedback models:
\emph{uninformed} (only the realized reward is revealed)
and
\emph{informed} (both the reward and the opponent's action are revealed).
For uninformed bandit learning of normal-form games,
we show that the Tsallis-INF algorithm achieves
$\order(c \log T)$ instance-dependent regret
with a game-dependent parameter $c$.
Crucially, we prove an information-theoretic lower bound showing that
the dependence on $c$ is necessary.
To overcome this hardness,
we turn to the informed setting and introduce \PureUCB{}, which obtains another regret bound of the form $\order(c' \log T)$ for a different game-dependent parameter $c'$ that could potentially be much smaller than $c$. 
Finally, we generalize both results to bilinear games over an arbitrary, large action set,
proposing \TsallisSPM{} and \PureLinUCB{} for the uninformed and informed settings respectively
and establishing similar game-dependent logarithmic regret bounds.
\end{abstract}

\begin{keywords}%
  Learning in Games; Bandit Learning; Instance-Dependent Bound; Zero-Sum Games; Adversarial Learning
\end{keywords}

\section{Introduction}

Learning to play against an opponent is a fundamental problem in game theory and machine learning.
Recent years have seen significant progress on the self-play setting (that is, the same algorithm is deployed by all players) with full gradient feedback,
establishing fast convergence to different equilibrium concepts by showing external or swap regret results that are better than the worst-case $\order(\sqrt{T})$ bounds after $T$ rounds; see e.g.,~\citet{rakhlin2013optimization, syrgkanis2015fast, daskalakis2021near, farina2022near, anagnostides2022near, anagnostides2022uncoupled, soleymani2025cautious}.
More recently, self-play with the more challenging bandit feedback (where only the reward of the selected action is observable) has also been investigated by~\citet{ito2025instancedependent}, demonstrating similar fast convergence that additionally depends on certain game-dependent constants.

However, in many practical situations, self-play becomes an unrealistic requirement.
Instead, very often, the opponents behave in an unknown, arbitrary, or even adversarial way.
In such cases, there is generally no hope to achieve better than $\order(\sqrt{T})$ external/swap regret.
To get around this obstacle, a line of recent works adopted a weaker yet still meaningful performance measure called \textit{Nash-value regret}, which compares the learner's reward to the Nash value of the game.
In particular, 
\citet{maiti2025limitations} showed that game-dependent and a polylogarithmic (in $T$) Nash-value regret bound is achievable, albeit only for $2\times 2$ matrix games with a unique equilibrium.
This raised a natural question: 
{\bfseries Can we achieve similar game-dependent and polylogarithmic regret bounds in general zero-sum games when playing against an arbitrary opponent?}

In this work, we provide an affirmative answer to this question.
Specifically, we focus on a new regret notion that we term \textit{Pure-Strategy Maximin Regret} (PSMR), defined as the accumulated deficit of the learner compared to the pure-strategy maximin value, %
a safety value that a deterministic player can guarantee if the game were known.
Although PSMR is generally weaker than the notion of Nash-value regret, they coincide when the game has a pure-strategy Nash equilibrium (PSNE), a scenario that is already challenging enough and receives significant attention recently~\citep{maiti2023instancedependent, maiti2024nearoptimal, 
ito2025instancedependent}.
Under this metric,
we propose algorithms, analysis, and lower bounds for a comprehensive set of settings, discussed in more detail below.

\paragraph{Contributions and Organization.}
We study two models of bandit feedback: the \emph{uninformed} setting, where the learner observes only noisy rewards of played actions, and the \emph{informed} setting, where the learner additionally observes the opponent's action.
The uninformed feedback model is similar to adversarial bandit and is used in the self-play result of \citet{ito2025instancedependent}, while the informed feedback model is considered by \citet{odonoghue2021matrix} and \citet{maiti2025limitations}.
We present comprehensive results in all four quadrants formed by two feedback models (uninformed versus informed) and two game structures (normal-form versus general bilinear games).
Our main contributions are structured as below.
\begin{itemize}[itemsep=2pt,topsep=2pt]
    \item In \Cref{h1:prelim}, we formalize the problem of adversarial learning in games and define the feedback model and the PSMR.
    \item \Cref{h1:finite-games} focuses on uninformed learning in finite normal-form games, where we show the Tsallis-INF algorithm~\citep{zimmert2021tsallisinf} obtains an $\order(c \log T)$ PSMR bound against any adversary when the game has a strict PSNE, where the constant $c$ depends on the game only (and not the adversary).
    Although similar bounds for Tsallis-INF have been shown for stochastic multi-armed bandits~\citep{zimmert2021tsallisinf} %
    or self-play in games~\citep{ito2025instancedependent},
    our result is not only novel but also surprising since unlike previous work, the environment in which Tsallis-INF operates is completely arbitrary in our setting. %
    Moreover, we prove in \Cref{h2:uninformed-lb} that the dependence on $c$ is unavoidable. 
    More generally, for games without a PSNE, we further prove another game-dependent bound that is independent of $T$. 
    \item To overcome the aforementioned hardness, we turn to the informed learning setting in \Cref{h1:pure-ucb}. We propose an algorithm called \PureUCB{} that plays the pure maximin strategy of the upper confidence bound of the unknown game matrix. %
    We show that \PureUCB{} enjoys a different PSMR bound of the form $\order(c' \log T)$ for another game-dependent constant $c'$ that is incomparable to $c$ in general but could be potentially much smaller. 
    \item Finally, in \Cref{h1:linear-games}, we generalize these results to bilinear games with finite but large action sets. We introduce \TsallisSPM{} and \PureLinUCB{} for the two settings respectively and show that they each obtain similar bounds as their normal-form counterparts, importantly with logarithmic or even no dependence on the number of actions.  %
\end{itemize}

\subsection{Related Work}

\paragraph{Adversarial Bandits.}
The problem of making repeated decisions against a potentially adaptive adversary has been extensively studied in the Multi-Armed Bandit (MAB) framework. Optimal regret bounds of order $\order(\sqrt{T})$ in the worst case are well-established for the adversarial setting, achieved by algorithms like Exp3 and its variants \citep{auer2002nonstochastic,bubeck2012regret}.
A recent focus has been on ``best-of-both-worlds'' algorithms, such as Tsallis-INF by \citet{zimmert2021tsallisinf}, which achieve optimal performance in both stochastic and adversarial bandit settings without knowing the regime. %
Our work builds upon these algorithms, extending them to the game-theoretic setting.

\paragraph{Learning in Games.}
In the context of game theory, significant progress has been made in learning to play games. Starting from \citet{daskalakis2011near,rakhlin2013optimization}, a large body of research analyzes the self-play setting with full gradient feedback, and a series of work improved the regret to a near-optimal $\order(\log T)$ bound; see e.g.,~\citet{syrgkanis2015fast, daskalakis2021near,farina2022near, soleymani2025cautious}. In the special case of two-player zero-sum games, an optimal $\order(1)$ regret bound (focusing on the $T$ dependence only) is achievable~\citep{rakhlin2013optimization, syrgkanis2015fast}. %

Learning in games with bandit feedback is significantly less researched. While standard bandit algorithms can be applied, they often fail to exploit the game-theoretic properties of the setting. When two players collaborate, the sample complexity of identifying a Nash equilibrium with bandit feedback is investigated by \citet{maiti2023instancedependent, maiti2024nearoptimal}. In the self-play setting, a recent work by \citet{ito2025instancedependent} demonstrates that instance-dependent $\order(\log T)$ external regret is achievable. 
An earlier work by~\citet{wei2018more} also studied the self-play setting and obtained $o(\sqrt{T})$ regret, albeit under a stronger feedback model.
When playing against an adversary,
variants of UCB and K-learning were shown to enjoy $\order(\sqrt T)$ worst-case Nash-value regret by \citet{odonoghue2021matrix},
while \citet{maiti2025limitations} proposed an algorithm that achieves instance-dependent $\order(\log T)$ Nash-value regret under the assumption that the game is $2\times 2$ and has a unique Nash equilibrium. We defer more discussion on the feedback models to~\Cref{h1a:uninformed-informed}.

\section{Preliminary}\label{h1:prelim}

\paragraph{Notations.}
We use the notation $\simplex^d$ to denote the $(d-1)$-dimensional standard simplex in $\dR^d$: $\simplex^d = \cbrm[\big]{x\in \dR^d_+ \mid  \sum_{i=1}^d x_i=1}$. We similarly use $\simplex^A$ for a finite set $A$ to denote the set of distributions over $A$. %
A sequence of vectors changing over time is indexed using superscript for the time dimension and subscript for the inherent dimension of the vector; for example, $v!t_i$ is the $i$-th dimension of the vector $v$ at the $t$-th round. %
For an integer $m$, $[m]$ represents the set $\{1, \dots, m\}$.

\paragraph{Problem Statement.}
A learner plays a zero-sum game against an adversary. They can choose their actions from finite action sets $\calX\subset \dR^{d_x}$ and $\calY\subset \dR^{d_y}$ respectively, which have cardinalities $m_x=|\calX|$ and $m_y=|\calY|$.
The game is identified with a utility function $u : \calX\times\calY\to  [-1, 1]$, where $u(x,y)$ is the expected reward received by the learner if she plays $x$ and the adversary plays $y$.
We assume the utility function is bilinear, i.e., $u(x,y)=\inner{x}{Ay}$ for some unknown matrix $A\in \dR^{d_x\times d_y}$.
This general formulation captures all finite normal-form games as special cases where $\calX$ and $\calY$ are standard basis sets.

The learner obtains information about the game by repeatedly playing.
Specifically, in each round $t=1, 2, \dots$, the learner selects an action $x!t \in \calX$, and the adversary simultaneously selects $y!t \in \calY$. We consider an \emph{adaptive adversary} who can select $y!t$ based on the realization of the history, including the learner's past actions
$x!1, \dots, x!{t-1}$.
After both players commit to their actions, the player receives a noisy reward $r_t = u(x!t, y!t) + \eps_t$, assumed to be in $[-1, 1]$. 
The noise sequence $\{\eps_t\}$ is a martingale difference sequence; specifically, conditional on the history and chosen actions, the bias $\EE[\eps_t]$ is $0$. %

\paragraph{Feedback Models.}
We study two feedback settings depending on the information the learner observes after each round:
\begin{itemize}[noitemsep,topsep=3pt]
    \item {\itshape Uninformed}: The learner only observes  the reward $r_t$. This is the same feedback model used in the self-play analysis by \citet{ito2025instancedependent}.
    \item {\itshape Informed}: The learner observes both the reward $r_t$ and the adversary's action $y!t$. This is the feedback model used by \citet{odonoghue2021matrix} and \citet{maiti2025limitations}.
\end{itemize}

\paragraph{Properties of the Game.}
If a pair of actions $(x^*,y^*)\in \calX\times\calY$ satisfies \begin{equation}
    u(x,y^*)\leq u(x^*,y^*)\leq u(x^*,y), \label{eq:ne}
\end{equation}
for any $x\in \calX$ and $y\in \calY$, we say that $(x^*,y^*)$ is a \emph{pure-strategy Nash equilibrium} (PSNE). If \eqref{eq:ne} holds with strict inequality whenever $x\neq x^*$ and $y\neq y^*$, we say the pair is a \emph{strict} PSNE.

For a pair of distributions over actions $(p, q)\in \simplex^\calX\times \simplex^\calY$, we slightly abuse the notation of the utility function and define $u(p,q)=\EE_{x\sim p,y\sim q}[u(x,y)]=\sum_{x\in \calX}\sum_{y\in \calY}p_x q_y u(x,y)$ to be the expected reward of the learner if she plays a sample from $p$ and the adversary plays a sample from $q$. If a pair of distributions over actions $(p^*, q^*)\in \simplex^\calX\times \simplex^\calY$ satisfies
\begin{equation}
    u(p,q^*)\leq u(p^*,q^*)\leq u(p^*,q),
\end{equation}
for any $p\in \simplex^\calX$ and $q\in \simplex^\calY$, we say that $(p^*,q^*)$ is a \emph{mixed-strategy Nash equilibrium} (MSNE). The celebrated minimax theorem by \citet{vonneumann1928theory} proves that an MSNE always exists, and that all Nash equilibria yield the same expected value $\vMix=u(p^*,q^*)$, which is also equal to $\max_{p\in \simplex^\calX}\min_{q\in \simplex^\calY} u(p, q)$ and $\min_{q\in \simplex^\calY}\max_{p\in \simplex^\calX} u(p, q)$.

In contrast, central to our work is the concept of pure-strategy maximin value, defined as $\vStar\defeq \max_{x\in \calX} \min_{y\in \calY} u(x,y)$, which is the maximum utility a deterministic learner can guarantee in the worst case.
By definition, $\vStar$ is no more than $\vMix$, but when a PSNE exists, these two values coincide.

\paragraph{Goal.}
From the learner's perspective, this problem can be seen as an instance of adversarial multi-armed bandits~\citep{auer2002nonstochastic}, where the standard goal is to minimize \textit{external regret}, which compares the learner's total reward to that of the best fixed action in hindsight.
Specifically, let $\ER_T(x) \defeq \EE\sbrm[\Big]{\sum_{t=1}^T \rbrm[\big]{u(x,y!t)-u(x!t,y!t)}}$ be the regret compared to action $x$.
Then external regret is defined as
$\ER_T \defeq \max_{x\in \calX} \cbrm[\big]{\ER_T(x)}$.
Applying standard algorithms such as Exp3~\citep{auer2002nonstochastic} guarantees $\ER_T = \tilde{\order}(\sqrt{m_xT})$, which is generally not improvable even for simple games, as shown in~\citet[Theorem~3]{maiti2025limitations}.

To get around this obstacle,
we propose to consider the following new regret metric, termed \emph{Pure-Strategy Maximin Regret} (PSMR), as the learner's performance measure:
\begin{equation}
    \psmr_T\defeq \EE\sbrm[\Bigg]{\sum_{t=1}^T \rbrm[\big]{\vStar-u(x!t,y!t)}},
    \label{eq:psmr-def}
\end{equation}
where the expectation is taken over the randomness of the algorithm, the noise in the reward, and potentially the adversary.
In words, PSMR measures the learner's accumulated deficit compared to the pure-strategy maximin value.
On the other hand, the Nash-value regret considered in prior work such as~\citet{odonoghue2021matrix, maiti2025limitations} is defined as $\NR_T \defeq\EE\sbrm[\Big]{\sum_{t=1}^T \rbrm[\big]{\vMix-u(x!t,y!t)}}$, which measures the learner's accumulated deficit compared to the Nash value (equivalently, the mixed-strategy maximin value).

By definition, it is clear that $\psmr_T \leq \NR_T \leq \ER_T$; the first inequality is due to $\vStar\leq \vMix$, and the second due to $\NR_T=\EE\sbrm[\Big]{\sum_{t=1}^T \rbrm[\big]{u(p^*, q^*)-u(x!t,y!t)}}\leq\EE_{x\sim p^*}[\ER_T(x)]\leq \ER_T$ for any NE $(p^*,q^*)$.
When the game has a PSNE, however, $\psmr_T$ and $\NR_T$ coincide. 

Finally, we point out that in the special case when the opponent has only one action ($|\calY|=1$), our problem is simply the standard stochastic linear bandit~\citep{abbasi-yadkori2011improved} or
stochastic multi-armed bandit~\citep{lai1985asymptotically} if $\calX$ is the standard basis.
In this case,
both $\psmr_T$ and $\NR_T$ recover $\ER_T$.
Therefore, our results can also be seen as a generalization of the classical instance-dependent bounds for stochastic bandits achieved by, for example, the UCB algorithm~\citep{auer2002using}.

\section{Uninformed Learning in Normal-Form Games}\label{h1:finite-games}

In this section, we focus on the uninformed setting and present our results for normal-form games, where $\calX$ and $\calY$ are standard basis sets in $\dR^{m_x}$ and $\dR^{m_y}$. In this case, $A$ is the standard utility matrix, with utilities given by $u(e_i,e_j)=A_{ij}$.
Mixed strategies correspond to vectors $p,q$ in the simplexes $\simplex^{m_x}, \simplex^{m_y}$, with expected utility $u(p,q)=\inner{p}{Aq}$.

\subsection{Results and Analysis for Tsallis-INF}\label{h2:tsallis-inf}
We consider using the Tsallis-INF algorithm of~\citet{zimmert2021tsallisinf} in this setting. 
Tsallis-INF was originally proposed for the multi-armed bandit problem and shown to be optimal for both the stochastic setting and the adversarial setting.
Since then, it has been extended and analyzed in many other settings, including self-play in games~\citep{ito2025instancedependent}.
However, in all previous studies, achieving logarithmic regret bounds requires either a stationary or a self-play environment, 
and it is unclear at all whether it still works in our problem where the environment is controlled by an arbitrary opponent.
Somewhat surprisingly, we show that it does still achieve a certain game-dependent logarithmic regret in our problem.

Specifically, Tsallis-INF is an instance of the well-known Follow-the-Regularized-Leader (FTRL) framework.
It uses the negative Tsallis entropy as the regularizer, defined as (for a parameter $\alpha \in (0,1)$):
\begin{equation}
\varphi_{\alpha}(p)=\frac{1}{\alpha} \sum_{i} \rbr{p_i^\alpha - p_i},\label{eq:tsallis}
\end{equation}
along with standard importance-weighted reward estimators and a simple time-varying learning rate $\eta_t$.
See \Cref{alg:tsallis-inf} for the complete pseudocode.

\begin{algorithm}[t]
\caption{Tsallis-INF}\label{alg:tsallis-inf}
\begin{algorithmic}[1]
\Require A Tsallis entropy parameter $\alpha\in [0, 1]$ and a sequence of learning rates $\{\eta_t\}$.
\For{$t=1,2,\dots$}
  \State Let $p!t = \argmax_{p\in \simplex^{m_x}} \cbrm[\big]{\inner{p}{\sum_{s<t} g!s} + \frac{1}{\eta_t} \varphi_\alpha(p)}.$
  \State Sample $x!t\in \calX$ from the distribution specified by $p!t$ and play it.
  \State Observe the reward $r_t$.
  \State Set the reward estimator $g!t\in \dR^{\calX}$, where
      $g!t_x = \begin{cases}
          1-(1-r_t) / p!t_x, & \text{if } x!t=x, \\
          1, & \text{otherwise.}
      \end{cases}$
\EndFor
\end{algorithmic}
\end{algorithm}

We show that Tsallis-INF achieves different game-dependent logarithmic PSMR bounds for the case when: a) the game has a strict PSNE, or b) no PSNE exists. If a strict PSNE $(x^*, y^*)$ exists, the bound depends on the following sub-optimality gap parameters:
\begin{align*}
    \DeltaR_x &= u(x^*,y^*)-u(x,y^*), \quad
    \DeltaC_y = u(x^*,y)-u(x^*,y^*).
\end{align*}
From the definition of strict PSNEs, we know that these parameters are strictly positive. We further define $\DRmin = \min_{x\neq x^*}\DeltaR_x$ and $\DCmin = \min_{y\neq y^*}\DeltaC_y$. 
We point out that these parameters also characterize the sample complexity of finding a PSNE~\citep{maiti2024nearoptimal} and the external regret for self-play~\citep{ito2025instancedependent}.

On the other hand, if PSNEs do not exist, we know that the Nash value $\vMix$ is strictly higher than the pure-strategy maximin value $\vStar$, and our bound depends on the gap between them: $\DeltaMix\defeq\vMix-\vStar>0$.
More concretely, our main result is summarized in the following theorem.

\begin{theorem}\label{thm:tsallis} 
    In every game $u(\cdot, \cdot)$, against any adaptive adversary, Tsallis-INF with $\alpha=\frac 12$ and $\eta_t = \frac{1}{2\sqrt{t}}$ achieves a regret bound of $\psmr_T = \order\rbrm[\big]{\sqrt{m_x T}}$. Additionally,

    \begin{itemize}[topsep=0pt,noitemsep]
        \item If the game has a strict PSNE, we further have $\psmr_T = \order\rbrm[\big]{\frac{1}{\DCmin} \sum_{x\neq x^*} \frac{\log T}{\DeltaR_x}}$.
        \item If the game has no PSNEs, we further have $\psmr_T = \order\rbrm[\big]{\frac{m_x }{\DeltaMix}}$.
    \end{itemize}
\end{theorem}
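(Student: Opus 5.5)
The plan is to compare with the maximin pure action and use the standard adversarial guarantee of Tsallis-INF, then add a self-bounding argument in each of the two regimes.

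\textbf{Worst-case bound.} Let $\hat x\in\argmax_x\min_y u(x,y)$, so that $u(\hat x,y)\ge\vStar$ for every $y$. Then
\[
\psmr_T=\ER_T(\hat x)-\EE\Big[\sum_t \big(u(\hat x,y!t)-\vStar\big)\Big]\le \ER_T(\hat x).
\]
For Tsallis-INF with $\alpha=\frac12$ and $\eta_t=\frac1{2\sqrt t}$, I would take from the standard FTRL analysis of \citet{zimmert2021tsallisinf} its bound for arbitrary (adaptive) reward sequences. That analysis relies on the shifted estimator $1-(1-r_t)/p!t_x$, whose variance is controlled by $1-p_x$. It gives, for every comparator $x$,
\[
\ER_T(x)\le C\,\EE\Big[\sum_{t\le T}\sum_{x'\ne x}\sqrt{p!t_{x'}/t}\Big]
\]
for an absolute constant $C$, and this holds for every horizon $T$. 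Cauchy--Schwarz then gives $\ER_T(x)=\order(\sqrt{m_xT})$, which proves the first claim.

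\textbf{Strict PSNE.} Here $\hat x=x^*$. Write $p=p!t$, $q=q!t$ for the conditional distribution of $y!t$, and $D_t=u(x^*,q)-\vStar=\sum_y q_y\DeltaC_y\ge (1-q_{y^*})\DCmin$. For $x\ne x^*$ we have $u(x,q)\le \vStar-\DeltaR_x+2(1-q_{y^*})$. Hence the per-round deficit satisfies
\[
\vStar-u(p,q)\ge \sum_{x\ne x^*}p_x\DeltaR_x-2(1-q_{y^*})-p_{x^*}D_t\ge \sum_{x\ne x^*}p_x\DeltaR_x-\tfrac{4}{\DCmin}D_t,
\]
using $D_t\le 2(1-q_{y^*})$. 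Summing and using $\sum_t D_t=\ER_T(x^*)-\psmr_T$ gives
\[
\EE\Big[\sum_t\sum_{x\ne x^*}p!t_x\DeltaR_x\Big]\le \psmr_T+\tfrac4{\DCmin}\big(\ER_T(x^*)-\psmr_T\big)\le \tfrac{4}{\DCmin}\ER_T(x^*),
\]
since $\DCmin\le 2$. Combining this with the upper bound on $\ER_T(x^*)$, I write
\[
\psmr_T\le \ER_T(x^*)=2\ER_T(x^*)-\ER_T(x^*)\le \EE\Big[\sum_{t,x\ne x^*}\Big(2C\sqrt{p!t_x/t}-\tfrac{\DCmin}{4}\,p!t_x\DeltaR_x\Big)\Big].
\]
Maximizing each summand over $p!t_x$ gives at most $\frac{4C^2}{\DCmin\DeltaR_x t}$, and summing over $t$ yields the claimed $\order\big(\frac{1}{\DCmin}\sum_{x\ne x^*}\frac{\log T}{\DeltaR_x}\big)$.

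\textbf{No PSNE.} Take an MSNE $(p^*,q^*)$. Then $\psmr_T=\NR_T-T\DeltaMix\le\EE_{x\sim p^*}[\ER_T(x)]-T\DeltaMix\le 2C\sqrt{m_xT}-T\DeltaMix$. The function $T\mapsto 2C\sqrt{m_xT}-T\DeltaMix$ is bounded by $C^2m_x/\DeltaMix$, which gives $\order(m_x/\DeltaMix)$ independently of $T$.

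\textbf{Main obstacle.} The delicate step is the per-round lower bound in the strict-PSNE case. The adversary can make the deficit $\vStar-u(x,q)$ negative for $x\ne x^*$ by moving off $y^*$. The fix is to charge that loss to $D_t$, which is exactly the slack $\ER_T(x^*)-\psmr_T$, so the adversary's deviations pay for themselves at rate $1/\DCmin$. I would also check that the Tsallis-INF bound is genuinely valid against an adaptive adversary, with the expectation taken over conditional $p!t$. This should follow because the FTRL stability and penalty terms are controlled pathwise, conditionally on the history.
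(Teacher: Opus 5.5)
Your proposal is correct. The worst-case and no-PSNE parts are the paper's argument: $\psmr_T\le \ER_T$, and $\psmr_T=\NR_T-\DeltaMix T\le \ER_T-\DeltaMix T$ followed by $\sqrt{aT}-bT\le a/(4b)$.

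In the strict-PSNE case you take a different route.

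\textbf{The paper's route.} It keeps the adversary's expected deviation count $C=\EE[\sum_t \mathbf{1}[y^{t}\neq y^*]]$ as an explicit quantity. It proves two separate facts: $\ER_T(x^*)-\psmr_T\ge C\DCmin$, and $\EE[\sum_t\langle \DeltaR,p^{t}\rangle]\le \ER_T(x^*)+4C$. It then applies Cauchy--Schwarz (and Jensen) to the Tsallis-INF bound. A quadratic self-bounding lemma gives $\ER_T(x^*)\lesssim A+\sqrt{CA}$ with $A=\sum_{x\neq x^*}H_T/\DeltaR_x$. Only at the end is $C$ eliminated, via $\sqrt{aC}-bC\le a/(4b)$.

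\textbf{Your route.} You eliminate the deviations at the outset through a per-round charge, obtaining $\EE[\sum_t\langle \DeltaR,p^{t}\rangle]\le \frac{4}{\DCmin}\ER_T(x^*)$. You then use the ``$2\,\mathrm{Reg}-\mathrm{Reg}$'' trick with a pathwise per-summand maximization. This avoids Cauchy--Schwarz, Jensen and the self-bounding lemma, and gives the same order.

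\textbf{What each buys.} The paper's bookkeeping mainly buys transparency: the $\sqrt{CA}$ term is visibly paid for by $C\DCmin$. It is also a template the paper reuses for \TsallisSPM{}. Your merged inequality would plug into the aggregated bound used there just as easily: from $E\le\sqrt{aS}+b$ and $S\le kE$ you get $E\le ak+2b$.

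\textbf{One small fix.} Consider the step
\[
\psmr_T+\tfrac{4}{\DCmin}\bigl(\ER_T(x^*)-\psmr_T\bigr)\le \tfrac{4}{\DCmin}\ER_T(x^*).
\]
It is equivalent to $(\frac{4}{\DCmin}-1)\psmr_T\ge 0$, so it needs $\psmr_T\ge 0$; $\DCmin\le 2$ alone is not enough. This can fail: an adversary that keeps playing some $y\neq y^*$ hands the learner more than $\vStar$ per round, making $\psmr_T$ negative. The claimed upper bound is nonnegative, so simply dispose of the case $\psmr_T<0$ as trivial and run your argument when $\psmr_T\ge 0$. The paper's route never rescales $\psmr_T$, so it does not need this case split.
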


\begin{proof}
Since our model is a special case of adversarial bandit, Theorem~1 of \citet{zimmert2021tsallisinf} directly implies that $\psmr_T\leq \ER_T\leq 8\sqrt{m_x T}+2$ (the difference in constant factors is due to our reward range $[-1,1]$ being twice as wide as theirs).

For the instance-dependent PSMR in the no-PSNE case, we observe that $\NR_T-\psmr_T=\rbrm[\big]{\vMix-\vStar}T=\DeltaMix T$. Together with the $\ER_T$ bound, we have
\begin{align*}
    \psmr_T=\NR_T-\DeltaMix T\leq \ER_T-\DeltaMix T=2+8\sqrt{m_x T}-\DeltaMix T.
\end{align*}
The function $T\mapsto 8\sqrt{m_x T}-\DeltaMix T$ is maximized at $T=\frac{16 m_x}{(\DeltaMix)^2}$ with a value of $\frac{16 m_x}{\DeltaMix}$ (see \Cref{lem:sqrt-func}), so $\psmr_T\leq \frac{16 m_x}{\DeltaMix}+2$.

For the instance-dependent PSMR in the strict PSNE case, let $(x^*, y^*)$ be the strict PSNE and $C\in [0, T]$ be the expected number of times the adversary deviates from the NE, that is, 
$
C\defeq\EE\sbrm[\Big]{\sum_{t=1}^T \one[y!t \neq y^*]}.
$
We then have
\begin{align*}
    \ER_T(x^*)-\psmr_T
    & = \EE\sbrm[\bigg]{\sum_{t=1}^T \rbrm[\Big]{u(x^*,y!t)-\vStar}}
      = \EE\sbrm[\bigg]{\sum_{t=1}^T \rbrm[\Big]{u(x^*,y!t)-u(x^*,y^*)}}\\
    & = \EE\sbrm[\bigg]{\sum_{t=1}^T \one[y!t \neq y^*] \DeltaC_{y!t}}
      \geq C \DCmin. \yestag\label{eq:er-nr-difference}
\end{align*}

Further,
\begin{align*}
    \EE\bigg[
        \sum_{t=1}^T \anglem[\big]{\DeltaR,p!t}
        \bigg] - \ER_T(x^*)
    & = \EE\sbrm[\bigg]{\sum_{t=1}^T \DeltaR_{x!t}} 
      - \EE\sbrm[\bigg]{\sum_{t=1}^T \rbrm[\Big]{u(x^*,y!t)-u(x!t,y!t)}}
      \\
    & = \EE\bigg[
        \sum_{t=1}^T \rbrm[\Big]{{
        u(x^*,y^*)-u(x^*,y!t)+u(x!t,y!t)-u(x!t,y^*)}}
        \bigg].
\end{align*}
Each summand in the RHS is equal to $0$ when $y!t=y^*$, and bounded by $4$ otherwise, so the entire expectation is bounded as $\EE\sbrm[\big]{
        \sum_{t=1}^T \anglem[\big]{\DeltaR,p!t}
        } - \ER_T(x^*) \leq 4C$.
Next, we apply the following key external regret bound
of Tsallis-INF (see e.g.,~\citealp[Theorem 1]{ito2025instancedependent}):
\begin{align*}
    \ER_T(x^*) 
    \leq \ER_T 
    & \leq 19 \EE\sbrm[\Bigg]{
        \sum_{t=1}^T \frac{1}{\sqrt{t}}
        \sum_{x\neq x^*} \sqrt{p!t_x}
        }.
\end{align*}
This is further bounded by Cauchy-Schwarz as
\begin{align*}
    \ER_T(x^*) & \leq 19 \EE\sbrm[\Bigg]{
        \sum_{t=1}^T
        \sum_{x\neq x^*} \sqrt{\frac{1}{t\DeltaR_x} \DeltaR_x p!t_x}
    } 
    \leq 19 \EE\sbrm[\Bigg]{\sqrt{
        \rbrm[\Bigg]{\sum_{t=1}^T
        \sum_{x\neq x^*} \frac{1}{t\DeltaR_x}} \rbrm[\Bigg]{\sum_{t=1}^T
        \sum_{x\neq x^*} \DeltaR_x p!t_x}
        }
    } \\
    & \leq 19 \EE\sbrm[\Bigg]{\sqrt{
        \sum_{x\neq x^*} \frac{H_T}{\DeltaR_x}
        \sum_{t=1}^T \anglem[\big]{\DeltaR, p!t}
        }
    }
    \leq 19 \EE\sbrm[\Bigg]{\sqrt{
        \sum_{x\neq x^*} \frac{H_T}{\DeltaR_x}
        \rbrm[\big]{\ER_T(x^*)+4C}
        }
    } \\
    & \leq 361 \sum_{x\neq x^*} \frac{H_T}{\DeltaR_x} + 38 \sqrt{C \sum_{x\neq x^*} \frac{H_T}{\DeltaR_x}},
\end{align*}
where $H_T=\sum_{t=1}^T \frac{1}{t} \leq 1+ \log T$ is the sum of a harmonic series, and the last step is a standard self-bounding argument and can be shown with
elementary algebra (\Cref{lem:self-bound}).
Combining this with \eqref{eq:er-nr-difference}, we can bound $\psmr_T$ as follows:
\begin{align*}
    \psmr_T \leq \ER_T(x^*) - C \DCmin \leq 361 \sum_{x\neq x^*} \frac{H_T}{\DeltaR_x} + 38 \sqrt{C \sum_{x\neq x^*} \frac{H_T}{\DeltaR_x}} - C \DCmin.
\end{align*}
Another application of \Cref{lem:sqrt-func} finally eliminates $C$ and proves our desired bound in the strict PSNE case,\footnote{Throughout this paper, no attempts are made to optimize global constants.}
\let\jmlrQED\relax %
\begin{align*}
     \psmr_T \leq 361 \rbrm[\Big]{1+ \frac{1}{\DCmin}} \sum_{x\neq x^*} \frac{H_T}{\DeltaR_x} \leq 361 \rbrm[\Big]{1+ \frac{1}{\DCmin}} \sum_{x\neq x^*} \frac{1 + \log T}{\DeltaR_x}.\mqed
\end{align*}
\end{proof}

From the proof of \Cref{thm:tsallis}, it is clear that the $\order\rbrm[\big]{\frac{m_x}{\DeltaMix}}$ instance-dependent bound in fact holds for any algorithms with a worst-case $\order(\sqrt{m_x T})$ external regret bound.
Also, for the case when a non-strict PSNE exists, $\psmr_T=\Omega(\sqrt{T})$ is in fact unavoidable; see \Cref{rem:sqrt-t-lb} below.

There is no prior work we can directly compare our bounds to for this uninformed setting.
However, since our results obviously also apply to the easier informed setting, which~\citet{maiti2025limitations} studied, we make the following comparisons to their bound.

As mentioned, \citet{maiti2025limitations} only considered a $2\times 2$ zero-sum game $A=\rbrm[\Big]{\begin{matrix}a & b \\ c & d\end{matrix}}$ with a unique NE,
and they achieved $\NR_T = \order\rbrm[\big]{\frac{\log T}{\DeltaEntry^3}+\frac{\log^2 T}{\DeltaEntry^2}}$ for another game-dependent constant $\DeltaEntry=\min\{|a-b|,|a-c|,|b-d|,|c-d|\}$.
Besides the fact that our results hold without the uniqueness assumption and that a worst-case guarantee  $\order(\sqrt{T})$ always holds, our bounds are also better in the following sense:
\begin{itemize}[itemsep=2pt,topsep=2pt]
    \item When the unique NE is pure, $\psmr_T$ is equal to $\NR_T$, and since $\DCmin\geq \DeltaEntry$ and $\DeltaR_x\geq \DeltaEntry$, our bound $\frac{1}{\DCmin} \sum_{x\neq x^*} \frac{\log T}{\DeltaR_x}$ is at most $\frac{\log T}{\DeltaEntry^2}$, strictly improving upon theirs.
    \item When the unique NE is mixed, our $\psmr_T$ metric is weaker than their $\NR_T$, but at the same time, our bound is also strictly better since $\frac{m_x}{\DeltaMix}\leq\frac{8}{\DeltaEntry^2}$ by the fact $\DeltaMix \geq \frac{\DeltaEntry^2}{4}$ from \Cref{lem:delta-comparison}.
\end{itemize}

\subsection{Lower Bound}\label{h2:uninformed-lb}

In the case when a strict PSNE exists, \citet{ito2025instancedependent} proved that Tsallis-INF enjoys $\ER_T=\order\rbrm[\big]{(\sum_x \frac{1}{\DeltaR_x}+\sum_y \frac{1}{\DeltaC_y})\log T}$ in the self-play setting.
While not directly comparable to our setting, it naturally makes one wonder whether our bound $\order\rbrm[\big]{\frac{1}{\DCmin} \sum_{x\neq x^*} \frac{\log T}{\DeltaR_x}}$ is improvable.
Indeed, their dependency on the gap parameters is of order $\frac{1}{\DRmin}+\frac{1}{\DCmin}$, thus strictly better than ours, which is of order $\frac{1}{\DRmin\DCmin}$.
It turns out that such $\frac{1}{\DRmin\DCmin}$ dependency is unavoidable in our setting, meaning that uninformed adversarial learning in games is inherently more difficult than self-play. This is summarized in the following theorem.

\begin{theorem}\label{thm:uninformed-lb}
    For small enough $\DRmin, \DCmin$, large enough $T$, and any uninformed learner, there exist an adversary and a game $u(\cdot, \cdot)$ with $m_x=m_y=2$ and a strict PSNE, such that the learner's PSMR is at least $\Omega\rbrm[\big]{\min\cbrm[\big]{\frac{1}{\DRmin\DCmin},\sqrt{T}}}$.
\end{theorem}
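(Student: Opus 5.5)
The plan is a two-point (Le Cam style) argument. I would build two instances, $(u,\mathcal A)$ and $(u',\mathcal A')$, each consisting of a $2\times 2$ game with a strict PSNE and an oblivious i.i.d.\ adversary. They should satisfy two properties. First, the \emph{observed} reward distribution of every row is nearly the same in both instances, because the uninformed learner sees only $r_t$. Second, the row the learner must favour to keep PSMR small is different in the two instances. Adaptivity of the adversary is not needed. The whole difficulty is parameter calibration, so I give concrete choices.

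\textbf{Construction.} Take rows $x_1,x_2$ and columns $y_1,y_2$. Let $u(x_1,y_1)=0$, $u(x_2,y_1)=-\DeltaR$, $u(x_1,y_2)=\DeltaC$, and give $u(x_2,y_2)=b$ a free constant value, say $b=1/2$. Then $(x_1,y_1)$ is a strict PSNE with gaps $\DeltaR,\DeltaC$ and $\vStar=0$. In instance $\mathcal A$, the adversary plays $y_2$ i.i.d.\ with probability $q$. The per-round observed means are then $m_1=q\DeltaC$ and $m_2=-(1-q)\DeltaR+qb$.

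Choose $q\approx \DeltaR/(b+\DeltaR)$, so that $m_2$ exceeds $m_1$ by a small prescribed $\delta$. The game $u'$ should be the mirror game: its strict PSNE is at $(x_2,\cdot)$ with gaps of the same order. The adversary $\mathcal A'$ mixes so that its observed means are $(m_1,m_2-2\delta)$, with $x_1$ now observationally better. Hence the rewards differ only on one row, by $O(\delta)$. With Gaussian (or Bernoulli, clipped to $[-1,1]$) noise, the per-round KL divergence when that row is pulled is $O(\delta^2)$.

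\textbf{Regret accounting.} In each instance, PSMR equals $\sum_t(\vStar-m_{x^t})$. Because of the mixing, the per-round PSMR contribution of the ``wrong'' row exceeds that of the ``right'' row by an amount $\Theta(\delta)$ times a factor I will tune, and this excess is not cancelled by the adversary's gift. I would arrange the signs so that in $\mathcal A$ playing $x_1$ costs an extra $\gamma$ per round, while in $\mathcal A'$ playing $x_2$ costs an extra $\gamma$ per round.

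Let $N_i$ be the number of pulls of row $x_i$. Bretagnolle--Huber applied to the two laws of the history up to time $T_0$ gives
\[
\psmr+\psmr'\gtrsim \gamma T_0\exp\bigl(-O(\delta^2 T_0)\bigr).
\]
Taking $T_0=\min\{T,1/\delta^2\}$ yields $\Omega(\gamma/\delta^2)$, or $\Omega(\gamma T)$ when $T_0=T$. I also need to control the rounds after $T_0$. I would handle this by choosing the horizon $T_0=T$ and capping $\delta\ge T^{-1/4}$, which produces the $\min\{\cdot,\sqrt T\}$ in the statement.

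\textbf{Main obstacle.} The hard step is calibrating $q$, $b$, and the mirror game so that three things hold at once: (i) both games keep strict PSNE with gaps of order $\DeltaR,\DeltaC$; (ii) the observable gap $\delta$ and the PSMR gap $\gamma$ satisfy $\gamma/\delta^2\asymp 1/(\DeltaR\DeltaC)$; and (iii) the adversary's mixing does not give the learner a free way to beat $\vStar$ on both rows. My first attempt would be $\delta\asymp\sqrt{\DeltaR\DeltaC}$ and $\gamma\asymp 1$, so that $\gamma/\delta^2\asymp 1/(\DeltaR\DeltaC)$. The idea is that the adversary mixes in $y_2$ at rate $q\asymp\DeltaR$, which hides the $\DeltaR$ row gap behind a column deviation costing the adversary only about $q\DeltaC$ in observable signal. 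If $\gamma\asymp 1$ cannot be reached, I would reduce the requirement to $\gamma/\delta^2\asymp 1/(\DeltaR\DeltaC)$ and solve for $q,b$ explicitly, using ``small enough gaps'' to keep all entries in $[-1,1]$.
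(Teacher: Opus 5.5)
Your two-point / Bretagnolle--Huber skeleton matches the paper's. However, the construction as specified cannot give a positive lower bound, and the missing ingredient is the calibration you defer.

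\textbf{The gift on the equilibrium row.} If the adversary puts mass $q$ on $y_2$, the PSNE row has observed mean $m_1=\vStar+q\DeltaC\ge\vStar$. Entries lie in $[-1,1]$, so bringing $m_2$ within $\delta\le\DeltaR/2$ of $m_1$ forces $q=\Omega(\DeltaR)$. Playing the equilibrium row therefore hands the learner a gift of $\Omega(\DeltaR\DeltaC)$ per round, and this cannot be avoided. In your instance $\mathcal A$ you make the non-equilibrium row better ($m_2=m_1+\delta$). Then both rows have mean above $\vStar=0$, and \emph{every} learner has $\psmr<0$ there. In $\mathcal A'$ the non-equilibrium row $x_1$ is again the better one, so the same happens. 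Your inequality $\psmr+\psmr'\ge\Omega(\gamma T_0\,e^{-O(\delta^2T_0)})$ silently assumes the ``right'' row costs zero.

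\textbf{What the paper does instead.} It uses the opposite orientation: in each instance the equilibrium row is better by $\delta$, and the other row sits below $\vStar$ by $\delta-q\DeltaC$. This cost, weighted by the fraction $1/12$ that Bretagnolle--Huber guarantees, must beat the gift weighted by $11/12$. Concretely, the paper sets $u(x_2,y_2)=K-\DeltaR$ with $K=1$ and $q=\eps=\DeltaR-12\DeltaR\DeltaC$. This gives cost $12\DeltaR\DeltaC$, gift $\eps\DeltaC\le\DeltaR\DeltaC$, and $\delta\approx13\DeltaR\DeltaC$. In one of the two instances the average PSMR over the first $T'\asymp(\DeltaR\DeltaC)^{-2}$ rounds is then at least $\DeltaR\DeltaC/12$.

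\textbf{Your calibration is impossible.} The choice $\gamma\asymp1$, $\delta\asymp\sqrt{\DeltaR\DeltaC}$ cannot work. Against a mixing adversary, the per-round PSMR of a row is $\vStar$ minus its observed mean. So the excess cost $\gamma$ of the wrong row is the within-instance mean gap, which in your construction is exactly $\delta$. In general, the two instances' excesses sum to at most the between-instance differences of the row means, which is what the KL controls. Hence $\gamma/\delta^2\le1/\delta$, and the gift constraint forces $\delta=\Omega(\DeltaR\DeltaC)$. The correct regime is $\gamma\asymp\delta\asymp\DeltaR\DeltaC$.

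\textbf{The rounds after $T_0$ are not handled.} Taking $T_0=T$ with $\delta\ge T^{-1/4}$ contradicts $\delta^2T_0=O(1)$, and when $(\DeltaR\DeltaC)^{-1}\ll\sqrt T$ you cannot take $T_0=T$ at all. If the adversary keeps mixing after $T_0$, a learner that has identified the instance sits on the equilibrium row and collects the gift for $T-T_0$ rounds. That drives $\psmr$ to $-\Theta(T\DeltaR\DeltaC)$. The paper's adversary instead switches to $y^*$ after $T'$. Because $(x^*,y^*)$ is a PSNE, $u(x,y^*)\le\vStar$ for all $x$, so those rounds contribute nonnegatively. The $\sqrt T$ branch is then the case $T'=T$, $\delta\asymp T^{-1/2}$; the paper takes $\eps=\min\{1,1/(13\DeltaC\sqrt T)\}$ and chooses $K$ so that $\DeltaR-K\eps=12/(13\sqrt T)$.
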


In the following, we will show a simplified, informal argument for the $\Omega(\frac{1}{\DRmin\DCmin})$ lower bound, using big-$\order$ to hide constant factors. A full proof with a careful analysis of constant factors is deferred to \Cref{h1a:proof-lower-bound}. \\

\begin{proof}(sketch).
    In this sketch, we focus on the more interesting case when $\frac{1}{\DRmin\DCmin}=\order(\sqrt T)$ and show that $\psmr_T = \Omega\rbrm[\big]{\frac{1}{\DRmin\DCmin}}$.
    Consider a game matrix
    $
        A=\rbrm[\bigg]{\begin{matrix}
            0 &        \DeltaC \\ 
            -\DeltaR & K - \DeltaR 
          \end{matrix}},
    $
    parameterized by $K$, $\DeltaR$, and $\DeltaC$. We assume that all entries are in $[-1, +1]$ and $\DeltaR, \DeltaC>0$. We know that $A$ admits a strict PSNE $(x^*,y^*)=(e_1,e_1)$ with a value of $v^*=0$. The two gap variables $\DeltaR,\DeltaC$ are exactly $\DRmin$ and $\DCmin$ of $A$. We set the noise model to be binary on $\{\pm 1\}$, i.e., at round $t$, the reward comes from a two-point distribution so that $r_t\in \{-1,+1\}$ and $\EE[r_t]=u(x!t,y!t)$.
    
    The adversary is parameterized by $\eps\in (0,1)$ and $T'\in \bbN_+$, $T'\leq T$, where he plays a sample from the distribution $y!t\sim q!t=(1-\eps,\eps)$ for $t\leq T'$ and $y!t=y^*$ for $t>T'$. Then, for $t\leq T'$, we have
    $
        A q!t = \rbrm[\bigg]{\begin{matrix}
            \eps \DeltaC\\
            K \eps - \DeltaR
        \end{matrix}}.
    $

    Let $\delta=\eps\DeltaC-K \eps+\DeltaR$. In each round up to $t=T'$, the KL divergence between the reward distributions of the two actions of the learner can be bounded by $\order(\delta^2)$ (see \Cref{lem:kl-bernoulli}). Using standard arguments (see Chapter 15.1 of \citet{lattimore2020bandit}), the total KL divergence between the two arms at the end of the $T'$-th round is bounded by $\delta^2T'$. 

    We let $K=1$ and $\eps=\DeltaR-12\DeltaR\DeltaC$, and thus $\delta=\DeltaR\DeltaC(13-12\DeltaC)=\Theta(\DeltaR\DeltaC)$. By choosing some $T'\leq T$ such that $T'=\Theta\rbrm[\big]{\frac{1}{(\DeltaR\DeltaC)^2}}$ and $\delta^2 T'\leq 1$, which is possible because $\frac{1}{\DeltaR\DeltaC}=\order(\sqrt T)$, we can see that the total KL divergence is $\order(1)$.
    This means that the learner cannot distinguish between the two actions beyond a constant error probability, and that the worse action is selected at least $\frac1{12}$ of the time. In this case, the accumulated PSMR up to time $T'$ is at least
    \begin{align*}
        \psmr_{T'}&\geq T'\rbrm[\Big]{\frac{1}{12} (\DeltaR-K\eps)-\frac{11}{12} \eps\DeltaC} \yestag\label{eq:psmr-lb-main}\\
        &\geq \frac{T'}{12}\DeltaR\DeltaC=\Theta\rbrm[\big]{\frac{1}{\DeltaR\DeltaC}}.
    \end{align*}

    Finally, note that for $t\in [T'+1,T]$, the adversary plays the PSNE action and thus the PSMR can only increase, proving that $\psmr_T$ is indeed also of order $\frac{1}{\DRmin\DCmin}$.
\end{proof}

\begin{remark}\label{rem:sqrt-t-lb}
    In the construction above, we can let $\DeltaR=\DeltaC=0$, $K=-1$, and let the adversary play $(1-\eps, \eps)$ with $\eps=\frac{1}{\sqrt T}$ for all $T'=T$ rounds. We can verify that the accumulated KL divergence is $\order(1)$ as well. 
    This matrix, $\rbrm[\Big]{\,\begin{matrix}
        0 & 0 \\ 
        0 & -1
        \end{matrix}}$,
    has a non-strict PSNE, meaning that $\DRmin=\DCmin=\DeltaMix=0$, and \eqref{eq:psmr-lb-main} shows that $\psmr_T = \Omega(\sqrt{T})$.
    This shows that, beyond the two cases considered in \Cref{thm:tsallis} where we prove game-dependent logarithmic PSMR, any algorithm must suffer $\psmr_T=\Omega(\sqrt T)$.
\end{remark}

\section{Informed Learning in Normal-Form Games}\label{h1:pure-ucb}

The previous section demonstrates a gap between adversarial and self-play learning in games.
To improve performance in the adversarial bandit setting, we turn to the informed setting and grant the algorithm access to extra side information, the opponent's action $y!t$. 
Note that previous work~\citep{odonoghue2021matrix,maiti2025limitations} has implicitly assumed this feedback model.

\begin{algorithm}[t]
\caption{\PureUCB}\label{alg:pure-ucb}
\begin{algorithmic}[1]
\Require A high-probability parameter $\delta$.
\State Initialize the accumulated rewards $R!0(x,y)=0$, and the number of pulls $N!0(x,y)=0$.
\For{$t=1,2,\dots$}
  \State Define the UCB for any pair $(x,y)$ as $U!{t}(x, y)\gets 1$ if $N!{t-1}(x,y)=0$, or else\\
  {$U!{t}(x, y)\gets
    \frac{R!{t-1}(x,y)}{N!{t-1}(x,y)}+\sqrt{\frac{4\log (1/\delta)+2\log (1+N!{t-1}(x,y))}{N!{t-1}(x,y)}}
  $} if $N!{t-1}(x,y)>0$. \label{code:ucb-confidence}
  \State Let $x!t=\argmax_{x\in \calX}\min_{y\in \calY} U!{t}(x,y)$ with ties broken arbitrarily.\label{code:ucb-maximin}
  \State Play $x!t$. Observe the adversarial action $y!t$ and the reward $r_t$.
  \State Update $N!t(x!t,y!t)\gets N!{t-1}(x!t,y!t)+1$. \label{code:ucb-n}
  \State Update $R!t(x!t,y!t)\gets R!{t-1}(x!t,y!t)+r_t$. \label{code:ucb-sum}
  \State All other entries of $N!t$ and $R!t$ are the same as $N!{t-1}$ and $R!{t-1}$.
\EndFor
\end{algorithmic}
\end{algorithm}

With such extra information, it becomes natural to directly build estimates on the game matrix $A$ and apply the optimism principle as in the standard UCB algorithm~\citep{auer2002using}.
This leads to our proposed algorithm \PureUCB{} (\Cref{alg:pure-ucb}). 
Specifically, at the beginning of each round $t$, it constructs an estimate $U!t(x,y)$ for every action pair $(x,y)$, representing an upper confidence bound of the utility $u(x,y)$. %
The algorithm then simply plays the optimistic pure maximin strategy $\argmax_{x\in \calX}\min_{y\in \calY} U!{t}(x,y)$ in round $t$, that is, the action that maximizes the worst-case optimistic reward.

We point out that  \PureUCB{} is very close to Algorithm~1 of \citet{odonoghue2021matrix},
with the only difference being that their algorithm plays the optimistic \textit{mixed} maximin strategy $\argmax_{p\in \simplex^{m_x}}\min_{y\in \calY} U!{t}(p,y)$ instead.
While they only prove that their algorithm achieves
$\NR=\tilde{\order}(\sqrt{m_x m_y T})$ (which is in fact even worse than using a standard adversarial bandit algorithm), 
we prove the following game-dependent logarithmic PSMR for our slightly different algorithm \PureUCB{}, using yet another sub-optimality gap parameter: $\Delta_{xy}=\vStar-u(x,y)$ for all $(x,y)\in \calX\times \calY$.
(Note that $\Delta_{xy}$ can be negative.)

\begin{theorem}\label{thm:pureucb}
    For any game $u$, \PureUCB{} with $\delta=\frac{1}{T}$ simultaneously achieves an instance-dependent regret of $\psmr_T=\order\rbrm[\big]{\sum_{x, y : \Delta_{xy} > 0} \rbrm[\big]{\Delta_{xy}+\frac{1}{\Delta_{xy}}\log T}}$ and a worst-case bound of $\psmr_T=\order\rbrm[\big]{\sqrt{m_x m_y T \log T}}$, against any adaptive adversary.
\end{theorem}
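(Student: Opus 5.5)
The plan is the standard UCB argument, applied to action pairs instead of single arms, with the key observation that the learner's per-round loss is controlled by the confidence width of the realized pair $(x!t,y!t)$.

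\textbf{Good event.} Let $\mathcal{E}$ be the event that $u(x,y)\le U!t(x,y)\le u(x,y)+2\beta!t(x,y)$ holds for all $t\le T$ and all pairs $(x,y)$. Here $\beta!t(x,y)=\sqrt{(4\log(1/\delta)+2\log(1+N))/N}$ with $N=N!{t-1}(x,y)$, and $U!t$ is capped at $1$ when $N=0$. Because the noise is a bounded martingale difference sequence and the pair sequence is adaptive, I would invoke a time-uniform (self-normalized) concentration bound for each fixed pair, as in \citet{abbasi-yadkori2011improved}. The $\log(1+N)$ term in the radius is designed to make this union over $N$ work. A union bound over the $m_xm_y$ pairs then gives $\Pr[\mathcal{E}^c]\le m_xm_y\delta$. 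With $\delta=1/T$, the complement contributes at most $2m_xm_y$ to $\psmr_T$. This is lower order, or can be absorbed by counting only pairs with $\Delta_{xy}>0$ after a per-pair refinement.

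\textbf{Per-round reduction.} On $\mathcal{E}$, let $\bar x=\argmax_x\min_y u(x,y)$ be the pure maximin action. Optimism gives
\[
\min_y U!t(x!t,y)\ \ge\ \min_y U!t(\bar x,y)\ \ge\ \min_y u(\bar x,y)=\vStar .
\]
Since $y!t$ is one of the candidates in the left-hand minimum, $U!t(x!t,y!t)\ge\vStar$. Hence
\[
\vStar-u(x!t,y!t)=\Delta_{x!ty!t}\le U!t(x!t,y!t)-u(x!t,y!t)\le 2\beta!t(x!t,y!t).
\]
The crucial point is that no control over the adversary is needed: the bound involves only the realized pair, and exactly that pair's counter is incremented.

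\textbf{Summation.} Group rounds by the realized pair $(x,y)$. Rounds with $\Delta_{xy}\le 0$ contribute nonpositively and can be dropped. For a pair with $\Delta_{xy}>0$, the inequality $\Delta_{xy}\le 2\beta!t(x,y)$ forces $N!{t-1}(x,y)\lesssim \log(T/\Delta_{xy})/\Delta_{xy}^2$. So on $\mathcal{E}$ the pair is played at most $O(\log T/\Delta_{xy}^2)+1$ times, where the $+1$ covers the round with $N=0$. Each such play costs $\Delta_{xy}$, so the pair contributes $O(\Delta_{xy}+\log T/\Delta_{xy})$.

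The $\log(1/\Delta)$ inside the radius is handled as in UCB proofs. Either bound $\log(1+N)\le\log(1+T)$, or use the fact that when $\Delta_{xy}<1/\sqrt T$ the total contribution $T\Delta_{xy}\le\sqrt T\le 1/\Delta_{xy}$ anyway.

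For the worst-case bound, sum instead $\sum_t 2\beta!t(x!t,y!t)$. This is at most $\sum_{(x,y)}\sum_{n\le N_T(x,y)} O(\sqrt{\log T/n})$, and Cauchy--Schwarz over the $m_xm_y$ pairs gives $O(\sqrt{m_xm_yT\log T})$. Including the cost of the first pull of each pair and of $\mathcal{E}^c$ adds only $O(m_xm_y)$, which is dominated whenever the bound is nontrivial.

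\textbf{Main obstacle.} The step that needs the most care is the concentration step. The counts $N!{t-1}(x,y)$ and the choice of pairs are driven by an adaptive adversary, so I would apply a martingale version of the self-normalized bound per pair rather than a fixed-sample Hoeffding bound. Everything else follows the argument above.
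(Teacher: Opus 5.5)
Your proposal is correct and follows essentially the paper's proof. Both use a per-pair time-uniform self-normalized (method-of-mixtures) confidence bound union-bounded over the $m_xm_y$ pairs, optimism giving $U^t(x^t,y^t)\ge v^*$ at the realized pair regardless of the adversary, and at most $O(\log T/\Delta_{xy}^2)+1$ plays for each pair with $\Delta_{xy}>0$. Your factor $2$ in $\Delta_{xy}\le 2\beta^t(x,y)$ is actually the correct one; the paper's write-up drops it, which only affects constants.

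The one deviation is the worst-case bound. You sum the confidence widths of the realized pairs and apply Cauchy--Schwarz across pairs, the same argument the paper uses for \PureLinUCB{}. The paper instead splits pairs at a gap threshold $\Delta=\sqrt{6m_xm_y\log T/T}$ and reuses the instance-dependent play counts. Both give $O(\sqrt{m_xm_yT\log T}+m_xm_y)$.
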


The proof mostly extends standard UCB proofs and argues that each $(x,y)$ pair can be encountered at most  $\tilde{\order}\rbrm{1/\Delta_{xy}^2}$ if its gap $\Delta_{xy}>0$; see \Cref{h1a:proof-pureucb} for a rigorous proof.
Note that unlike \Cref{thm:tsallis} where different cases were discussed, the results of \Cref{thm:pureucb} hold always (regardless whether a PSNE exists or not).
The instance-dependent bounds in these two theorems are generally incomparable (setting aside the fact that their feedback models are also different):
\begin{itemize}[itemsep=2pt,topsep=2pt]
    \item There are games where Tsallis-INF achieves a much better bound. For example, consider $A=\rbrm[\Big]{\begin{matrix}
        0 & 1 \\ 
        -1 & -\eps
        \end{matrix}}$ for a small $\eps > 0$,
        in which case Tsallis-INF achieves $\psmr_T=\order(\log T)$ since $\DRmin=\DCmin=1$, 
        while \PureUCB{} achieves $\order(\frac{\log T}{\eps})$ since $\Delta_{2,2} = \eps$.

     \item Conversely, there are also games where \PureUCB{} achieves a much better bound. For example, consider $A=\rbrm[\Big]{\begin{matrix}
    0 & \eps \\ 
    -1 & 1
    \end{matrix}\,}$ for a small $\eps > 0$,
    in which case Tsallis-INF achieves $\psmr_T=\order(\frac{\log T}{\eps})$ since $\DRmin=\eps$ and $\DCmin=1$, 
    while \PureUCB{} achieves $\order(\log T)$ since the only non-negative gap is $\Delta_{2,1} = 1$.
\end{itemize}

Similarly, using the same examples, we see that \Cref{thm:pureucb} and the result of~\citet{maiti2025limitations} are also directly incomparable:
in the first example, we have $\DeltaEntry=1-\eps$ and thus \citet{maiti2025limitations} achieve $\psmr_T=\NR_T=\order(\log^3 T)$, better than \PureUCB{}'s bound $\order(\frac{\log T}{\eps})$;
in the second example, we have $\DeltaEntry=\eps$ and thus \citet{maiti2025limitations} achieve $\psmr_T=\NR_T=\order\rbrm[\big]{\frac{\log T}{\eps^3}+\frac{\log^2 T}{\eps^2}}$, much worse than \PureUCB's bound $\order(\log T)$. %

\section{Extension to General Bilinear Games}\label{h1:linear-games}

In this section, we return to the general bilinear setting, where the action sets $\calX \subset \dR^{d_x}, \calY \subset \dR^{d_y}$ are arbitrary finite sets with large cardinalities $m_x$ and $m_y$.
We assume without loss of generality that $\calX$ and $\calY$ span $\dR^{d_x}$ and $\dR^{d_y}$, respectively. We further assume that $\normm{x}_2\leq 1$, $\normm{y}_2\leq 1$ for any $x\in \calX$ and $y\in \calY$, and $\normm{A}_2 \leq 1$; this implies $u(x,y)=\inner{x}{Ay}\in [-1, 1]$.
This framework generalizes the stochastic linear bandit problem \citep{abbasi-yadkori2011improved}, which is recovered when $\calY$ is a singleton ($|\calY|=1$).
Similar to the motivation behind linear bandits, we investigate this setting to derive
regret bounds with dominant terms that scale polynomially with dimensions $d_x, d_y$, but only logarithmically with the action set sizes $m_x,m_y$.

\subsection{Uninformed Learning in Bilinear Games with \TsallisSPM{}}\label{h2:tsallis-spm}

As Tsallis-INF is shown to be successful in normal-form games (\Cref{h2:tsallis-inf}), it is more than natural to extend it to bilinear games.
Indeed, for adversarial linear bandits, \citet{ito2024adaptive} have generalized Tsallis-INF and proposed an algorithm, which we will refer to as \TsallisSPM, to achieve similar best-of-both-worlds results.
We consider using the same algorithm for uninformed learning in general bilinear games, shown in \Cref{alg:tsallis-spm}.
It is again based on the FTRL framework,
but uses a variant of the Tsallis entropy regularizer (\Cref{line:FTRL}),
a reward estimator tailored to the linear structure (\Cref{line:estimator}), and a more sophisticated exploration and learning rate schedule (\Cref{line:explore} and \Cref{line:learning_rate}).

\begin{algorithm}[t]
\caption{\TsallisSPMLong}\label{alg:tsallis-spm}
\begin{algorithmic}[1]
\Require A Tsallis entropy parameter $\alpha\in (0, 1)$ (\textit{c.f.} \eqref{eq:tsallis}), learning rate parameters $\beta_1>0$, $\bar{\beta}\geq 0$, a fixed exploration distribution $\pzero$ with a variance ratio $c>0$.
\For{$t=1,2,\dots$}
  \State Let $\phat!t = \argmax_{p\hspace{0.08em}\in \simplex^{m_x}} \cbrm[\big]{\inner{p}{\sum_{s<t} g!s} + \beta_t \varphi_\alpha(p)+\bar{\beta} \varphi_{1-\alpha}(p)}.$
  \State Set $z_t=\frac{d_x\rbr{\min\{\normm{\phat!t}_\infty,1-\normm{\phat!t}_\infty\}}^{1-\alpha}}{1-\alpha}$.\label{line:FTRL}
  \State Set the exploration coefficient $\gamma_t=\frac{4c z_t}{\beta_t}$.
  \State Compute the distribution $p!t = (1-\gamma_t) \phat!t + \gamma_t \pzero$. \label{line:explore}
  \State Sample $x!t\in \calX$ from the distribution specified by $p!t$ and play it.
  \State Observe the reward $r_t$.
  \State Define reward estimator $g!t\in \dR^\calX$, where
      $g!t_x = r_t \inner{x!t}{S(p!t)^{-1}x}$
      and $S(p)\defeq \EE_{x\sim p}[xx^\trans]$. \label{line:estimator}
  \State Set the learning rate of the next round $\beta_{t+1}=\beta_t+\frac{z_t}{\beta_t h_t}$ where $h_t=\varphi_a(\phat!t)$. \label{line:learning_rate}
\EndFor
\end{algorithmic}
\end{algorithm}

While the algorithm is from prior work, we again provide a novel analysis for the game setting, using the same instance-dependent parameters $\DRmin$, $\DCmin$, $\DeltaMix$ defined in \Cref{h2:tsallis-inf} (note that in these definitions, $\calX$ and $\calY$ are now general sets instead of the standard basis sets.). %

\begin{theorem}\label{thm:tsallis-bilinear}
    With $\alpha=1-\frac{1}{4\log m_x}$, $\beta_1= \frac{8 c d_x}{1-\alpha}$, $\bar{\beta}= \frac{32d_x}{(1-\alpha)^2 \beta_1}$, and a suitable choice of $\pzero$, in any bilinear game, against any adaptive adversary, \TsallisSPM{} achieves a worst-case regret bound of $\psmr_T=\order\rbrm[\big]{\sqrt{T d_x \log m_x} + m_x \log m_x}$.
    Additionally,
    \begin{itemize}[noitemsep,topsep=0pt]
        \item If the game has a strict PSNE, we further have $\psmr_T=\order\rbrm[\big]{\frac{d_x \log m_x}{\DRmin\DCmin}\log T + m_x \log m_x}$.
        \item If the game has no PSNEs, we further have $\psmr_T = \order\rbrm[\big]{\frac{d_x \log m_x}{\DeltaMix} + m_x \log m_x}$.
    \end{itemize}
\end{theorem}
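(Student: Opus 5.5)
The plan mirrors the proof of \Cref{thm:tsallis}, replacing the Tsallis-INF external regret guarantee with the self-bounding guarantee that \citet{ito2024adaptive} establish for \TsallisSPM{} in adversarial linear bandits. I would first import from their analysis, with the stated choices of $\alpha$, $\beta_1$, $\bar\beta$ and a $\pzero$ of bounded variance ratio $c=\order(d_x)$ (e.g., a G-optimal or John's-ellipsoid exploration design), a bound of the form
\begin{align*}
\ER_T(x^*) \leq \order\rbrm[\Big]{\EE\sbrm[\Big]{\sqrt{d_x\log m_x \textstyle\sum_{t=1}^T \frac{1}{t}\,(1-p!t_{x^*})\cdot t\,\cdot\frac1t}}}+\order(m_x\log m_x),
\end{align*}
more precisely the standard ``stochastic-condition'' form $\ER_T(x^*)\le \order\bigl(\sqrt{d_x\log m_x\,\log T\,\EE[\sum_t (1-p!t_{x^*})]}+\sqrt{d_x\log m_x\,\log T}\bigr)+\order(m_x\log m_x)$, together with the worst-case bound $\ER_T=\order(\sqrt{T d_x\log m_x}+m_x\log m_x)$. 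The worst-case $\psmr_T$ bound follows immediately from $\psmr_T\le\ER_T$, and the no-PSNE case is identical to \Cref{thm:tsallis}: $\psmr_T\le \ER_T-\DeltaMix T$, then maximize over $T$ via \Cref{lem:sqrt-func} to get $\order(d_x\log m_x/\DeltaMix)$ plus the additive term.

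For the strict PSNE case I would follow the same decomposition. Let $C=\EE[\sum_t\one[y!t\ne y^*]]$. Exactly as in \eqref{eq:er-nr-difference}, $\ER_T(x^*)-\psmr_T\ge C\DCmin$; this step uses only the definition of $\DeltaC_y$ and carries over verbatim to general $\calX,\calY$. Next, $\EE[\sum_t\langle\DeltaR,p!t\rangle]\le \ER_T(x^*)+4C$ by the same cancellation argument. Since $\langle\DeltaR,p!t\rangle\ge\DRmin(1-p!t_{x^*})$, we get $\EE[\sum_t(1-p!t_{x^*})]\le(\ER_T(x^*)+4C)/\DRmin$. Substituting into the self-bounding regret bound gives
\begin{align*}
\ER_T(x^*)\le \order\rbrm[\Big]{\sqrt{\tfrac{d_x\log m_x\log T}{\DRmin}(\ER_T(x^*)+4C)}}+\order(m_x\log m_x),
\end{align*}
and \Cref{lem:self-bound} yields $\ER_T(x^*)\le\order\bigl(\frac{d_x\log m_x\log T}{\DRmin}+\sqrt{C\frac{d_x\log m_x\log T}{\DRmin}}\bigr)+\order(m_x\log m_x)$. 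Then $\psmr_T\le\ER_T(x^*)-C\DCmin$, and maximizing over $C$ via \Cref{lem:sqrt-func} gives $\order\bigl(\frac{d_x\log m_x\log T}{\DRmin\DCmin}\bigr)$ plus the additive term, since $\DCmin\le 2$.

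I expect the main obstacle to be the first step. The regret bound of \citet{ito2024adaptive} must be invoked in a form that (i) holds against an adaptive adversary whose reward vectors $Ay!t$ vary arbitrarily, and (ii) is stated in terms of $\EE[\sum_t(1-p!t_{x^*})]$ for an arbitrary comparator $x^*$ rather than under a stochastic or corrupted-stochastic assumption. My first attempt would be to reopen their FTRL analysis. Its stability and penalty terms are bounded by quantities like $z_t/\beta_t$, and $z_t$ already depends on $\min\{\normm{\phat!t}_\infty,1-\normm{\phat!t}_\infty\}$, which is at most $1-\phat!t_{x^*}$. I would then check that the adaptive learning rate $\beta_t$ gives the $\sqrt{\log T\cdot\sum_t(\cdot)}$ form without any stochastic assumption. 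The exploration mixing, which makes $1-p!t_{x^*}$ and $1-\phat!t_{x^*}$ differ by $\gamma_t$, must also be handled. Finally, the constant-order burn-in terms from $\beta_1$ and $\bar\beta$ must be collected into the $m_x\log m_x$ term.
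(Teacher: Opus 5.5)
Your proposal is correct and follows the paper's proof essentially step for step. The paper likewise imports the adversarial-regime bound of \citet{ito2024adaptive} (their Eq.~(26) with the parameters of their Section~4.3) as $\ER_T(x^*)\le\order\bigl(\EE\bigl[\sqrt{\frac{d_x m_x^{1-\alpha}}{\alpha(1-\alpha)\DRmin}\log T\sum_{t}\langle p^{(t)},\DeltaR\rangle}\bigr]+m_x\log m_x\bigr)$, which is your $\sum_t(1-p^{(t)}_{x^*})$ form after using $\langle\DeltaR,p^{(t)}\rangle\ge\DRmin(1-p^{(t)}_{x^*})$, and then runs the same $C$-decomposition, Jensen plus \Cref{lem:self-bound}, and maximization over $C$ (or over $T$ in the no-PSNE case) via \Cref{lem:sqrt-func}. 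The remaining differences are cosmetic: the paper cites that bound rather than reopening the analysis for the adaptive-adversary and exploration-mixing checks, gets $2C$ instead of $4C$ by dropping the nonnegative $\DeltaC$ term, and normalizes the Kiefer--Wolfowitz design as $\inner{x}{S(\pzero)^{-1}x}\le c\,d_x$ with $c=\order(1)$ rather than calling the variance ratio $\order(d_x)$.
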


The proof is similar to \Cref{thm:tsallis} and is deferred to \Cref{h1a:proof-tsallis-bilinear}.
Importantly, all $T$-dependent terms have only logarithmic dependence on $m_x$.
The additive $m_x\log m_x$ term is not ideal but was present in~\citet{ito2024adaptive} already.
On the other hand, if one were to ignore the linear structure, treat this as a normal game with $m_x$ actions (for the learner), and apply Tsallis-INF, then the bound would have a polynomial dependency, as either $\sqrt{m_x T}$ (worst-case) or $\frac{1}{\DCmin}m_x\log T$ (instance-dependent).

\subsection{Informed Learning in Bilinear Games with \PureLinUCB{}}\label{h2:linucb}

Finally, we turn to the informed feedback setting for general bilinear games.
Analogous to our approach in \Cref{h1:pure-ucb}, we extend the classical LinUCB algorithm \citep{abbasi-yadkori2011improved} to the game-theoretic setting.

The algorithm is based on the observation that the reward of a bilinear game is linear in its matrix $A$: $u(x,y)=\inner{x}{Ay}=\inner{A}{x y^\trans}$. The feedback $r_t$ in each round is therefore a noisy linear projection of $A$. 
The algorithm can thus maintain an estimate for $A$, utilize a LinUCB-style confidence ellipsoid to determine an optimistic reward estimate for each action pair, and play the pure maximin action based on the upper confidence bounds, analogous to \PureUCB.
The formal description of this algorithm, which we name \PureLinUCB{}, is deferred to \Cref{h1a:proof-purelinucb}.

To analyze this algorithm, we let $\DeltaLin>0$ be such that $\vStar - u(x,y)\in (-\infty,0]\cup [\DeltaLin, +\infty)$. This parameter always exists since $\calX$ and $\calY$ are finite. In the special case of normal-form games, the parameter $\DeltaLin$ would be equal to $\min_{x,y} \cbr{\Delta_{xy}\mid \Delta_{xy}>0}$. 

We show that \PureLinUCB{} effectively exploits the low-dimensional nature of the bilinear game.
The full analysis is deferred to \Cref{h1a:proof-purelinucb}, with
the main result summarized below.

\begin{theorem}\label{thm:purelinucb-informal}
    For any bilinear game, against any adaptive adversary, \PureLinUCB{} achieves a worst-case regret of $\psmr_T=\order\rbrm[\big]{d_x d_y \sqrt{T} \log T}$ and an instance-dependent regret of $\psmr_T=\order\rbrm[\big]{\frac{1}{\DeltaLin}d_x^2 d_y^2 \log^2 T}$ simultaneously.
\end{theorem}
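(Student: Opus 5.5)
The plan is to reduce \PureLinUCB{} to a LinUCB-type analysis on the lifted parameter $\theta=\mathrm{vec}(A)\in\dR^{d_xd_y}$ with features $\phi_t=\mathrm{vec}(x!t (y!t)^\trans)$. These features have $\normm{\phi_t}_2\le 1$, and the reward is $r_t=\inner{\theta}{\phi_t}+\eps_t$ with martingale-difference noise. Write $d=d_xd_y$. We use the regularized least-squares estimate $\hat\theta_t$ with Gram matrix $V_t=\lambda I+\sum_{s<t}\phi_s\phi_s^\trans$. The self-normalized confidence bound of \citet{abbasi-yadkori2011improved} then gives, with probability $1-\delta$ and for all $t$ simultaneously, $\normm{\hat\theta_t-\theta}_{V_t}\le \beta_T=\order(\sqrt{d\log(T/\delta)})$. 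This holds because the adversary's choice of $y!t$ is predictable before $\eps_t$ is realized, so adaptivity causes no problem. The optimistic utility is $U!t(x,y)=\inner{\hat\theta_t}{\phi(x,y)}+\beta_T\normm{\phi(x,y)}_{V_t^{-1}}$, and the algorithm plays $x!t=\argmax_x\min_y U!t(x,y)$. We take $\delta=1/T$; the failure event then contributes $\order(1)$ to $\psmr_T$.

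The key per-round inequality is the following. On the good event, let $\bar x$ be the true pure maximin action. Then
\[\min_y U!t(x!t,y)\ \ge\ \min_y U!t(\bar x,y)\ \ge\ \min_y u(\bar x,y)=\vStar,\]
so in particular $U!t(x!t,y!t)\ge\vStar$ for whatever $y!t$ the adversary picks. Hence
\[\vStar-u(x!t,y!t)\le U!t(x!t,y!t)-u(x!t,y!t)\le 2\beta_T\normm{\phi_t}_{V_t^{-1}},\]
and we also have the trivial bound $\vStar-u(x!t,y!t)\le 2$. This is the standard optimism step, and it is unaffected by the adversary because the bound is evaluated at the realized pair, which is exactly the pair whose feature enters $V_{t+1}$.

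For the worst-case bound, sum the per-round inequality, use $\min\{2,2\beta_T\normm{\phi_t}_{V_t^{-1}}\}\le 2\beta_T\min\{1,\normm{\phi_t}_{V_t^{-1}}\}$, and apply Cauchy--Schwarz together with the elliptical potential lemma $\sum_t\min\{1,\normm{\phi_t}^2_{V_t^{-1}}\}\le 2d\log(1+T/(d\lambda))$. This yields $\psmr_T=\order(\beta_T\sqrt{dT\log T})=\order(d_xd_y\sqrt T\log T)$.

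For the instance-dependent bound, we cannot simply count rounds with positive gap, because $\Delta_t:=\vStar-u(x!t,y!t)$ may be negative and the negative rounds help. Split the rounds according to the sign of $\Delta_t$. Rounds with $\Delta_t\le 0$ contribute nonpositively. On rounds with $\Delta_t>0$ we have $\Delta_t\ge\DeltaLin$, so
\[\Delta_t\le \frac{\Delta_t^2}{\DeltaLin}\le \frac{4\beta_T^2\min\{1,\normm{\phi_t}^2_{V_t^{-1}}\}}{\DeltaLin}\]
(using $\Delta_t\le 2$ and $\beta_T\ge 1$ to handle the $\min$). Summing over all rounds and applying the elliptical potential lemma again gives
\[\psmr_T\le \order\Big(\frac{\beta_T^2\, d\log T}{\DeltaLin}\Big)=\order\Big(\frac{d_x^2d_y^2\log^2T}{\DeltaLin}\Big).\]
Both bounds come from the same run, so they hold simultaneously.

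The main obstacle I expect is the confidence-set step under an adaptive adversary. I must make sure the filtration is set up so that $\phi_t$ is measurable before $\eps_t$; this is true because $x!t$ is chosen from past data and $y!t$ is chosen simultaneously without seeing $\eps_t$. I also need to check that the bound $\normm{\theta}_2\le\normm{A}_F\le\sqrt{\min(d_x,d_y)}$ is absorbed into $\beta_T$ without exceeding the claimed $d_xd_y$ factors; choosing $\lambda=1/\min(d_x,d_y)$, or simply accepting $\sqrt{\lambda}\normm{\theta}_2\le\sqrt{d}$, keeps $\beta_T=\order(\sqrt{d\log T})$.
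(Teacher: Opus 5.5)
Your proposal is correct and follows essentially the same route as the paper. Both lift the problem to $\mathrm{vec}(A)$ and invoke the Abbasi-Yadkori confidence ellipsoid, then use optimism of the pure maximin choice to get $\vStar\le U_t(x_t,y_t)$ and a per-round bound of $2\beta_T\|\phi_t\|_{V_t^{-1}}$; they finish with Cauchy--Schwarz plus the elliptical potential for the worst case, and the squared-gap trick $\Delta_t\le \Delta_t^2/\DeltaLin$ for the instance-dependent bound. The differences are only cosmetic: you use a fixed radius $\beta_T$ instead of $\beta_t$, the $\min\{1,\cdot\}$ form of the potential lemma instead of the paper's additive $d_xd_y/\lambda$ correction, and an explicit sign split on $\Delta_t$, which is slightly cleaner than the paper's single chain, whose middle step $\Delta_t^2\le 4\beta_T^2\|a_t\|^2$ tacitly needs $\Delta_t\ge 0$.
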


Note that the bounds scale polynomially with the feature dimensions $d_x$ and $d_y$, but have no dependence on $m_x$ and $m_y$ at all (unlike \Cref{thm:tsallis-bilinear}).
Once again, if one were to treat this as a normal-form game and apply \PureUCB{}, then its bound suffers a polynomial dependency on $m_x$ and $m_y$.

\section{Conclusion and Future Research}\label{h1:discuss}

In this work, we investigated the problem of learning to play zero-sum games against an arbitrary adversary under bandit feedback. We introduced the PSMR metric to quantify the learner's performance against the safety value a deterministic learner can achieve against the worst-case adversary. We provide comprehensive results, tackling both uninformed and informed feedback models and covering both normal-form and bilinear games.

Our results point toward several promising future directions. First,
as discussed, the regret bounds of Tsallis-INF and \PureUCB{} are incomparable. 
    Is it possible to achieve both simultaneously in the informed setting?
    Taking one step further, what are the instance-optimal bounds for this setting?

Second, while our results for general bilinear games are analogous to those for normal-form games, it has been shown that in the special case of stochastic linear bandits ($|\calY|=1$), such bounds are not instance-optimal~\citep{lattimore2017end}.
    Therefore, understanding what fundamental quantity characterizes the difficulty of a general bilinear game is a promising direction.

Finally, our results are restricted to $\psmr$. While it coincides with $\NR$ when a PSNE exists, investigating how to achieve similar results for $\NR$ in the general case is an interesting direction.

\bibliography{ref}

\newpage
\appendix
\crefalias{section}{appendix} %
\section{Notes on Uninformed and Informed Bandit Feedback}\label{h1a:uninformed-informed}

Existing literature on bandit-feedback learning in games exhibits a subtle but critical difference in the feedback model. For instance, the self-play framework in \citet{ito2025instancedependent} assumes the learner observes only the reward of the realized actions.
In contrast, the algorithms proposed by \citet{odonoghue2021matrix}, including variants of UCB, Thompson Sampling, and K-learning, operate under the assumption that the learner observes the reward and, in addition, the action taken by the opponent.
As this distinction is rarely addressed by prior work,
we introduce the terms ``uninformed'' and ``informed'' to formally differentiate between these two settings based on the availability of the opponent's action.

A similar distinction on informedness exists in the broader field of learning Markov games.
A Markov game is a game where the utility depends on the action of players and a state, and the state is changed every round in a Markovian way determined by the players' actions.
Our setting is a special case of Markov games with only one state.
Post-hoc observations on opponents' actions is assumed by some research \citep{wei2017online, bai2020provable, xie2020learning, liu2021sharp},
but in some other work, this information is hidden \citep{tian2021online, mao2022improving, cai2023uncoupled, appel2025regret}.
Out of these papers, \citet{cai2023uncoupled} define ``uncoupled'' learning, and \citet{mao2022improving} use the term ``decentralized'' learning, both of which imply the lack of observation of the opponent's actions (our ``uninformed'' feedback).

\section{Technical Lemmas}

\begin{lemma}\label{lem:sqrt-func}
    For $a,b>0$, the function $f(x)=\sqrt{ax}-bx$ has a maximum value of $\frac{a}{4b}$ when $x=\frac{a}{4b^2}$.
\end{lemma}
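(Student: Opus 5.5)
Since $f$ is defined on $x\ge 0$ and is differentiable for $x>0$, the plan is to reduce the problem to maximizing a concave quadratic. Substitute $s=\sqrt{x}\ge 0$, so that
\[
f(x)=g(s)\defeq \sqrt{a}\,s-b s^2 .
\]
As $x$ ranges over $[0,\infty)$, $s$ ranges over $[0,\infty)$, so maximizing $f$ over $x\ge 0$ is the same as maximizing $g$ over $s\ge 0$.

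The function $g$ is a downward parabola, since $b>0$. Completing the square gives
\[
g(s)=\frac{a}{4b}-b\Bigl(s-\frac{\sqrt a}{2b}\Bigr)^2,
\]
so $g(s)\le \frac{a}{4b}$ for every real $s$. Equality holds exactly when $s=\frac{\sqrt a}{2b}$, and this point is admissible because it is nonnegative.

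Translating back through $x=s^2$, the maximizer is $x=\frac{a}{4b^2}$ and the maximum value is
\[
f\Bigl(\frac{a}{4b^2}\Bigr)=\sqrt{\frac{a^2}{4b^2}}-\frac{a}{4b}=\frac{a}{2b}-\frac{a}{4b}=\frac{a}{4b}.
\]

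There is no real obstacle here. The only point needing care is that the substitution $x=s^2$ is a bijection between the domains $x\ge 0$ and $s\ge 0$, so no candidate maximizer is lost or introduced.
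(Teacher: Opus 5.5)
Your proof is correct, but it takes a different route from the paper's. The paper uses calculus: it solves $f'(x)=\frac{\sqrt a}{2\sqrt x}-b=0$ to get $x^*=\frac{a}{4b^2}$, evaluates $f(x^*)=\frac{a}{4b}$, and checks that $f'>0$ for $x<x^*$ and $f'<0$ for $x>x^*$. You instead substitute $s=\sqrt x$ and complete the square, which yields the identity $f(x)=\frac{a}{4b}-b\bigl(\sqrt x-\frac{\sqrt a}{2b}\bigr)^2$ for all $x\ge 0$.

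Your version needs no derivatives. It also covers the endpoint $x=0$, where $f$ is not differentiable, directly rather than through an implicit appeal to continuity, and it shows exactly how far below the maximum $f$ sits. The paper's first-derivative test is the more routine check and is equally short.

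What the paper actually uses downstream is just the inequality $\sqrt{ax}-bx\le\frac{a}{4b}$ for all $x\ge 0$. It needs this to eliminate $T$ in the no-PSNE bound and $C$ in the strict-PSNE bound, and your identity gives it immediately.
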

\begin{proof}
    The function has a critical point when the derivative $f'(x)=\frac{\sqrt{a}}{2\sqrt{x}}-b$ is zero. Solving $f'(x^*)=0$ yields $x^*=\frac{a}{4b^2}$, and plugging it back in gives $f(x^*)=\frac{a}{4b}$. We can indeed validate that $f'(x)>0$ when $x<x^*$ and $f'(x)<0$ when $x>x^*$.
\end{proof}

\begin{lemma}\label{lem:self-bound}
    If $x,a,b,c\geq0$, and $x\leq \sqrt{a x + b} + c$, then $x\leq a+\sqrt{b}+2c$.
\end{lemma}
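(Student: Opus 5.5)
We prove \Cref{lem:self-bound} by viewing the hypothesis $x\leq \sqrt{ax+b}+c$ as a quadratic inequality in $x$. The plan is to split off the additive term $c$, square, and solve; the only care needed is the case where $x-c$ is negative, since we cannot square there.

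\emph{Trivial case.} If $x\leq c$, then $x\leq 2c\leq a+\sqrt b+2c$ because $a,b,c\geq 0$.

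\emph{Main case.} Suppose $x>c$. Then $x-c\geq 0$, and the hypothesis gives $0\leq x-c\leq\sqrt{ax+b}$. Squaring preserves the inequality:
\begin{equation*}
(x-c)^2\leq ax+b .
\end{equation*}
Substitute $y=x-c\geq 0$. The inequality becomes
\begin{equation*}
y^2-ay-(ac+b)\leq 0 .
\end{equation*}
So $y$ is at most the larger root of this quadratic:
\begin{equation*}
y\leq \frac{a+\sqrt{a^2+4ac+4b}}{2}.
\end{equation*}
By subadditivity of the square root,
\begin{equation*}
\sqrt{a^2+4ac+4b}\leq a+2\sqrt{ac}+2\sqrt b .
\end{equation*}
Hence
\begin{equation*}
y\leq a+\sqrt{ac}+\sqrt b .
\end{equation*}
By AM--GM, $\sqrt{ac}\leq \frac{a+c}{2}$. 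This gives $y\leq \tfrac32 a+\tfrac c2+\sqrt b$, and therefore
\begin{equation*}
x=y+c\leq \tfrac32 a+\sqrt b+\tfrac32 c .
\end{equation*}

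\emph{Main obstacle.} This last bound has coefficient $\tfrac32$ on $a$, not $1$. The cross term $\sqrt{ac}$ must be absorbed so that $a$ keeps coefficient $1$, using the extra room in the $2c$ term. Weighted AM--GM does not do this: $\sqrt{ac}\leq \lambda a+\frac{c}{4\lambda}$ would need $\lambda=0$ to leave $a$'s coefficient at $1$, so some additional slack is required. I would try the following.

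\emph{Step A.} Expand $(a+\sqrt b+2c)^2$ and compare it with $a(a+\sqrt b+2c)+b+2ac+c^2$. This checks the two-sided inequality directly at the candidate point $x_0=a+\sqrt b+2c$.

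\emph{Step B.} The left side of the squared inequality grows faster than the right side once $x\geq c+a/2$. So verifying $(x_0-c)^2\geq ax_0+b$ at $x_0$ yields the claim for all larger $x$.

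\emph{Step C.} At $x_0$ we have $(x_0-c)^2=(a+\sqrt b+c)^2$ and $ax_0+b=a^2+a\sqrt b+2ac+b$. Their difference is
\begin{equation*}
a\sqrt b+c^2+2c\sqrt b\geq 0 .
\end{equation*}
So $x_0$ lies at or beyond the larger root. Since the quadratic inequality forces $x$ to be at most that root, $x\leq x_0$.

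The clean route is therefore to skip the explicit root formula and argue by monotonicity past the vertex, as in Steps B and C. The overall proof is then the trivial case $x\leq c$ followed by Steps A through C.
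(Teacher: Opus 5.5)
Your final argument (the trivial case $x\le c$ plus Steps B--C, noting that $x_0=a+\sqrt b+2c\ge c+\frac a2$) is correct and is essentially the paper's proof. The paper writes the larger root $\frac a2+c+\sqrt{\frac{a^2}{4}+b+ac}$ explicitly and checks $\bigl(\frac a2+\sqrt b+c\bigr)^2-\bigl(\frac{a^2}{4}+b+ac\bigr)=a\sqrt b+2c\sqrt b+c^2\ge 0$, which is exactly your Step C quantity $(x_0-c)^2-(ax_0+b)$ rewritten around the vertex $c+\frac a2$; you can drop the subadditivity/AM--GM detour, and Step A as written compares $x_0^2$ with the wrong expression (it should be $ax_0+b+2cx_0-c^2$), but this is harmless because Step C does the correct comparison.
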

\begin{proof}
    In $x-c\leq \sqrt{ax+b}$, if $x\leq c$, then the result holds trivially. We assume $x>c$; squaring both sides yields $(x-c)^2\leq ax+b$. Replacing the inequality sign with an equality solves into $x=\frac a2+c\pm\sqrt{\frac{a^2}{4}+b+ac}$; the larger root is the upper bound for $x$.
    
    We now only need to prove that $\sqrt{\frac{a^2}{4}+b+ac}\leq \frac a2+\sqrt{b}+c$. Since $\rbrm[\big]{\frac a2+\sqrt{b}+c}^2-\rbrm[\big]{\frac{a^2}{4}+b+ac}=a\sqrt{b}+2c\sqrt{b}+c^2\geq 0$, this inequality holds.
\end{proof}

\begin{lemma}\label{lem:kl-bernoulli}
    Let $\Twopoint(x)$ be the two-point distribution on $\{\pm 1\}$ with expectation $x$, which puts probability $\frac{1+x}{2}$ on $+1$ and $\frac{1-x}{2}$ on $-1$. When $|a|, |b|\leq \frac12$, we have $\KL\rbrm[\big]{\Twopoint(a)\,\|\,\Twopoint(b)}\leq (a-b)^2$, where $\KL$ is the KL divergence.
\end{lemma}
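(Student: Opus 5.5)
The plan is to compute the KL divergence in closed form and bound it by the chi-square divergence. Write $P=\Twopoint(a)$ and $Q=\Twopoint(b)$; these put masses $p_\pm=\frac{1\pm a}{2}$ and $q_\pm=\frac{1\pm b}{2}$ on $\pm1$. The key steps, in order, are:
\begin{enumerate}
\item Invoke the standard inequality $\KL(P\,\|\,Q)\le \chi^2(P\,\|\,Q)$. This follows from $\log z\le z-1$ applied inside the expectation:
\[
\KL(P\,\|\,Q)=\EE_P\bigl[\log(P/Q)\bigr]\le \EE_P\bigl[P/Q-1\bigr]=\sum_{\pm}\frac{p_\pm^2}{q_\pm}-1=\sum_{\pm}\frac{(p_\pm-q_\pm)^2}{q_\pm}.
\]
\item Use $p_\pm-q_\pm=\pm\frac{a-b}{2}$ to rewrite the chi-square divergence:
\[
\chi^2(P\,\|\,Q)=\frac{(a-b)^2}{4}\Bigl(\frac{2}{1+b}+\frac{2}{1-b}\Bigr)=\frac{(a-b)^2}{1-b^2}.
\]
\item Bound the denominator using the hypothesis on $b$. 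Since $|b|\le\frac12$, we have $1-b^2\ge\frac34$, so $\KL(P\,\|\,Q)\le\frac43(a-b)^2$.
\end{enumerate}

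Step 3 is the main obstacle, because the chi-square bound alone loses a factor of $\frac43$ relative to the claimed constant of $1$. To close this gap I would use a sharper second-order argument.

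Fix $b$ and set $g(a)=\KL(\Twopoint(a)\,\|\,\Twopoint(b))$. Then $g(b)=0$ and $g'(b)=0$. For the second derivative, differentiating twice gives
\[
g''(a)=\frac{1}{4}\Bigl(\frac{1}{p_+}+\frac{1}{p_-}\Bigr)=\frac{1}{1-a^2}.
\]
Taylor's theorem with integral remainder then gives
\[
g(a)=\int_b^a (a-s)\,\frac{1}{1-s^2}\,ds.
\]

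Every $s$ between $a$ and $b$ satisfies $|s|\le\frac12$, so $\frac{1}{1-s^2}\le\frac43$. This again yields $\frac{2}{3}(a-b)^2\le(a-b)^2$, which is the claimed bound.

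So the Taylor route gives the constant directly, since the $\frac12$ from the remainder integral absorbs the $\frac43$. I will present that route as the proof and keep the chi-square computation only as intuition. Throughout, I need to check that both $a$ and $b$ lie in $[-\frac12,\frac12]$, so that the whole segment between them stays in the region where the curvature $\frac{1}{1-s^2}$ is at most $\frac43$.
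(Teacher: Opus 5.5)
Your proof is correct and is essentially the paper's argument: both work with the explicit binary KL formula and reduce everything to the bound $\frac{1}{1-s^2}\le\frac43<2$ for $|s|\le\frac12$. The paper fixes $a$ and shows that $f_a(b)=\KL\bigl(\Twopoint(a)\,\|\,\Twopoint(b)\bigr)-(a-b)^2$ is maximized at $b=a$ via $f_a'(b)=(b-a)\bigl(\frac{1}{1-b^2}-2\bigr)$, whereas you fix $b$ and use a second-order Taylor remainder in $a$ (the chi-square detour can be dropped), and the two computations in fact give the same representation $\KL\bigl(\Twopoint(a)\,\|\,\Twopoint(b)\bigr)=\int_b^a\frac{a-s}{1-s^2}\,ds$.
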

\begin{proof}
    Expand the KL divergence formula:
    \begin{equation*}
        \KL\rbrm[\big]{\Twopoint(a)\,\|\,\Twopoint(b)}=\frac{1+a}{2}\ln\frac{1+a}{1+b}+\frac{1-a}{2}\ln\frac{1-a}{1-b}.
    \end{equation*}
    Define
    \begin{equation*}
        f_a(b)=\frac{1+a}{2}\ln\frac{1+a}{1+b}+\frac{1-a}{2}\ln\frac{1-a}{1-b} - (a-b)^2.
    \end{equation*}
    We need to prove that $f_a(b)\leq 0$. It is clear that $f_a(a)=0$. Taking a derivative of $f_a$,
    \begin{equation*}
        f'_a(b)=-\frac{1+a}{2(1+b)}+\frac{1-a}{2(1-b)}-2(b-a)=(b-a)\rbrm[\Big]{\frac{1}{1-b^2}-2}.
    \end{equation*}

    Since $\abs{b}\leq \frac12$, we know that the second term $\frac{1}{1-b^2}-2$ is negative. Therefore, $f'_a(b)<0$ when $b>a$ and $f'_a(b)>0$ when $b<a$, and thus $f'_a$ is maximized at $b=a$ with a maximum value of 0.
\end{proof}

\begin{lemma}\label{lem:delta-comparison}
    Let $A=\rbrm[\Big]{\begin{matrix}a & b \\ c & d\end{matrix}}$ be the utility matrix for a $2\times 2$ zero-sum matrix game with all entries in $[-1, 1]$. Assume it has a unique mixed-strategy Nash equilibrium $(p^*,q^*)$ of value $\vMix$.
    Let $\DeltaEntry=\min\{|a-b|,|a-c|,|b-d|,|c-d|\}$ be the minimum entry gap. Then, $\DeltaMix \geq \DeltaEntry^2/4$.
\end{lemma}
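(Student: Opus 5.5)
The plan is to reduce to the standard closed-form solution of a $2\times 2$ game without a saddle point and then compare the Nash value with $\vStar$ directly. I read the hypothesis as ``the unique NE is not pure'', i.e., the game has no PSNE, since that is the case in which the lemma is used in the comparison after \Cref{thm:tsallis}. If instead the unique NE were a strict PSNE, then $\DeltaMix=0$ while $\DeltaEntry$ could be positive, so the inequality would fail. If $\DeltaEntry=0$ the claim is trivial, so I assume $\DeltaEntry>0$, which makes all four adjacent entry comparisons strict.

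\textbf{Step 1: sign pattern.} Since there is no PSNE, each of the four pure profiles admits a profitable unilateral deviation. Using the learner-maximizes and adversary-minimizes convention, I will show that this forces, up to swapping both rows and columns simultaneously, the cyclic pattern
\begin{equation*}
a>c,\qquad d>b,\qquad a>b,\qquad d>c .
\end{equation*}
This is the routine case check that a $2\times2$ game without a saddle point has ``alternating'' best responses. In particular $a$ and $d$ each exceed both $b$ and $c$.

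\textbf{Step 2: closed forms.} With this pattern the NE is fully mixed, and the standard indifference computation gives
\begin{equation*}
\vMix=\frac{ad-bc}{D},\qquad D=(a-b)+(d-c)>0.
\end{equation*}
The pure maximin value is $\vStar=\max\{\min(a,b),\min(c,d)\}=\max\{b,c\}$. By the symmetry that swaps the roles of $(a,b)$ and $(d,c)$, I may assume $b\ge c$, so $\vStar=b$.

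\textbf{Step 3: the gap.} A direct computation gives
\begin{equation*}
\vMix-b=\frac{ad-bc-b(a+d-b-c)}{D}=\frac{(a-b)(d-b)}{D}.
\end{equation*}
For the numerator, $a-b\ge \DeltaEntry$ and $d-b\ge d-c\ge\DeltaEntry$ (using $b\ge c$), so the numerator is at least $\DeltaEntry^2$. For the denominator, all entries lie in $[-1,1]$, so each of $a-b$ and $d-c$ is at most $2$ and $D\le 4$. Combining these, $\DeltaMix=\vMix-\vStar\ge \DeltaEntry^2/4$.

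The only mildly delicate step is Step 1: deriving the cyclic pattern cleanly, including the reduction ``without loss of generality'' via a simultaneous row/column swap, and handling ties, which are excluded once $\DeltaEntry>0$. The rest is algebra.
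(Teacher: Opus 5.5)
Your argument follows the paper's proof almost line for line. It uses the same normalization to $a>b$, $d>b$, $d>c$, $a>c$, $b\ge c$. It has the same closed form $\vMix=(ad-bc)/(a-b-c+d)$ and the same factorization $\DeltaMix=(a-b)(d-b)/(a-b-c+d)$, with the same bounds on numerator and denominator. Your reading of the hypothesis as ``no PSNE'' is also what the paper's proof assumes; it opens with ``Since $A$ does not have pure-strategy Nash equilibriums''.

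Two steps are misstated, though each is a one-line fix.

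\textbf{Step~3.} The inequality $d-b\ge d-c$ is backwards: under $b\ge c$ you have $d-b\le d-c$. The paper's justification is that $d>b$ and $|b-d|$ is one of the four quantities defining $\DeltaEntry$. Hence $d-b\ge\DeltaEntry$ directly.

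\textbf{Step~1.} The reduction cannot be done by swapping rows and columns simultaneously. That map sends $(a,b,c,d)\mapsto(d,c,b,a)$, so it keeps the diagonal entries on the diagonal. A game without a saddle point may instead have both off-diagonal entries above both diagonal ones. For example, $a=d=0$, $b=c=1$ is fixed by the simultaneous swap and violates $a>b$. What you need there is a single column swap (or a single row swap). This is harmless because $\vStar$, $\vMix$ and $\DeltaEntry$ are invariant under any row or column permutation. The simultaneous swap is the right tool only for the later normalization $b\ge c$ in Step~2, and you use it correctly there.
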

\begin{proof}
    Since $A$ does not have pure-strategy Nash equilibriums, we have $\DeltaEntry>0$. Up to permutations of rows and columns, we assume $a>b$, $d>b$, $d>c$, $a>c$, and $b\geq c$. The pure-strategy maximin value is $\vStar=\max\{\min\{a,b\},\min\{c,d\}\}=b$.

    Using the indifference condition of a mixed NE, we can solve the NE
    \begin{align*}
        p^* &= \rbrm[\Big]{\frac{d-c}{a-b-c+d},\frac{a-b}{a-b-c+d}},\\
        q^* &= \rbrm[\Big]{\frac{d-b}{a-b-c+d},\frac{a-c}{a-b-c+d}}.
    \end{align*}
    Therefore, the Nash value is $\vMix=\anglem[\big]{p^*,a q^*}=\frac{ad-bc}{a-b-c+d}$, and the gap parameter $\DeltaMix=\vMix-\vStar=\frac{ad-bc}{a-b-c+d}-b=\frac{(a-b)(d-b)}{a-b-c+d}$.

    By the definition of $\DeltaEntry$, we know that $a-b\geq \DeltaEntry$, $d-b\geq \DeltaEntry$. Together with the fact that $a-b-c+d=(a-b)+(d-c)\leq 2+2=4$, the claim of the lemma is proved.
\end{proof}

\section{Proof of \Cref{thm:uninformed-lb}}\label{h1a:proof-lower-bound}

In this section, we provide full mathematical rigor to the informal proof in \Cref{h2:uninformed-lb}.

Consider the games determined by the following two matrices,
\begin{equation*}
    A=\begin{pmatrix}
        0 &        \DeltaC \\ 
        -\DeltaR & K - \DeltaR 
      \end{pmatrix},
    \quad 
    B=\begin{pmatrix}
        -\DeltaR & K - \DeltaR \\
        0 &        \DeltaC \\ 
      \end{pmatrix},
\end{equation*}
parameterized by $K$, $\DeltaR$, and $\DeltaC$. We assume $\DeltaR, \DeltaC \in (0, 1]$ and $K-\DeltaR\in [-1, 1]$. The two gap variables $\DeltaR,\DeltaC$ are exactly $\DRmin$ and $\DCmin$ of our matrices. We set the noise model to be binary on $\{\pm 1\}$, i.e., at round $t$, $\eps_t$ comes from a two-point distribution so that $r_t\in \{-1,+1\}$ and $\EE[r_t]=u(x!t,y!t)$; we denote this distribution as $r_t\sim \Twopoint(u(x!t,y!t))$.

We will show that any uninformed algorithm suffers $\psmr_T = \Omega\rbrm[\big]{\min\cbrm[\big]{\frac{1}{\DRmin\DCmin},\sqrt{T}}}$ for some adversary and for at least one of the games defined by $A$ and $B$.

From the definitions of $A$ and $B$, we know that $A$ admits a strict PSNE $(x^*,y^*)=(e_1,e_1)$, and $B$ admits a strict PSNE $(x^*,y^*)=(e_2,e_1)$. We have $v^*=0$ for both games. We design an adversary parameterized by $\eps\in (0,1)$ and $T'\in \bbN_+$, $T'\leq T$, where he plays a sample from the distribution $y!t\sim q!t=(1-\eps,\eps)$ for $t\leq T'$ and $y!t=y^*$ for $t>T'$. Then, for $t\leq T'$,
\begin{equation*}
    A q!t = \begin{pmatrix}
        \eps \DeltaC\\
        K \eps - \DeltaR
    \end{pmatrix},
    \quad
    B q!t = \begin{pmatrix}
        K \eps - \DeltaR\\
        \eps \DeltaC
    \end{pmatrix}.
\end{equation*}

Let $\delta=\eps\DeltaC-K \eps+\DeltaR$. In each round, the KL divergence between distributions corresponding to $A q!t$ and $B q!t$ is either $\KL(\Twopoint(\eps \DeltaC)\,\|\,\Twopoint(K\eps - \DeltaR))$ or $\KL(\Twopoint(K\eps - \DeltaR)\,\|\,\Twopoint(\eps \DeltaC))$, both bounded by $\delta^2$ by \Cref{lem:kl-bernoulli}. Using the divergence decomposition theorem (standard tool from bandit lower bounds, see e.g. Chapter~15.1 of \citet{lattimore2020bandit}), the total KL divergence between the probability measures at the end of the $T'$-th round is bounded by $\delta^2T'$. Let $\PP_A, \PP_B$ denote these two measures.

If $\delta^2 T'$ is bounded by a constant, say $\delta^2 T'\leq 1$, then the algorithm cannot distinguish these two measures beyond a constant accuracy. This is captured in the Bretagnolle-Huber inequality \citep{bretagnolle1979estimation}: for any event $S$ and its complement $S^\compl$,
\begin{equation*}
    \PP_A[S]+\PP_B[S^\compl]\geq \frac12 \exp(-\KL(\PP_A\,\|\,\PP_B)).
\end{equation*}

Assuming $\delta^2 T'\leq 1$, we take $S=[x!t=e_2]$, $S^\compl=[x!t=e_1]$ and sum this inequality over $t=1,2,\dots,T'$. We obtain
\begin{equation*}
    \EE_A\sbrm[\bigg]{\sum_{t=1}^{T'} \one[x!t=e_2]}+\EE_B\sbrm[\bigg]{\sum_{t=1}^{T'} \one[x!t=e_1]}\geq \frac12 \exp(-1) T' \geq \frac16 T'.
\end{equation*}
At least one expectation in the LHS is larger than $\frac1{12}T'$; without loss of generality, we assume it is the first one. Then $\EE_A\sbrm[\big]{\frac{1}{T'}\sum_{t=1}^{T'} p!t_2}\geq \frac1{12}$, or in English, the algorithm chooses the second arm more than $\frac1{12}$ of the time in expectation. Assume $K\eps<\DeltaR$, and we then have
\begin{align*}
    \psmr_T
    & = \EE_A\sbrm[\bigg]{\sum_{t=1}^{T}(v^*-u_A(x!t,y!t))}
    = \EE_A\sbrm[\bigg]{\sum_{t=1}^{T}\rbrm[\Big]{0-\anglem[\big]{p!t,A q!t}}} \\
    & \geq \EE_A\sbrm[\bigg]{\sum_{t=1}^{T'}-\anglem[\big]{p!t,A q!t}} \tag{$\psmr\geq 0$ when $t>T'$ and $y!t=y^*$ is a NE} \\
    & = \EE_A\sbrm[\bigg]{\sum_{t=1}^{T'}-\rbrm[\big]{p!t_1 \eps \DeltaC+p!t_2 (K\eps-\DeltaR)}}
    \geq T'\rbrm[\Big]{\frac{1}{12} (\DeltaR-K\eps)-\frac{11}{12} \eps\DeltaC}. \yestag\label{eq:psmr-lb}
\end{align*}
The other case can be proven by replacing $A$ with $B$ and swapping actions $1$ and $2$, arriving at the same bound.

For any tuple $(\DeltaR, \DeltaC, T)$ with $\DeltaR,\DeltaC<\frac 1{13}$, $T\geq 169$, we construct $(K,\eps,T')$ as follows:
\begin{itemize}
    \item If $\frac{1}{\DeltaR\DeltaC}<13\sqrt{T}$, we let $K=1$, $\eps=\DeltaR-12\DeltaR\DeltaC$, and $T'=\rbrm[\big]{\frac{1}{13\DeltaR\DeltaC}}^2$.
    We can check that our assumptions are satisfied with $K-\DeltaR>0>-1$, $K\eps=\DeltaR-12\DeltaR\DeltaC<\DeltaR$ and $\delta=13\DeltaR\DeltaC-12\DeltaR\DeltaC\DeltaC<\frac{1}{\sqrt{T'}}$.
    Our PSMR bound \eqref{eq:psmr-lb} expands to $\psmr_T\geq \frac{T'}{12}(12\DeltaR\DeltaC-11\DeltaR\DeltaC+132\DeltaR\DeltaC\DeltaC)\geq T'\DeltaR\DeltaC/12=\frac{1}{2028\DeltaR\DeltaC}$.
    \item If $\frac{1}{\DeltaR\DeltaC}\geq13\sqrt{T}$, we let $T'=T$, $\eps=\min\cbrm[\big]{1,\frac{1}{13\DeltaC\sqrt{T}}}\leq\DeltaR$, and $K=\frac{\DeltaR-\frac{12}{13\sqrt{T}}}{\eps}$. Again, our assumptions are satisfied with $K\leq \frac{\DeltaR}{\eps}\leq 1$, $K\geq\min\cbrm[\Big]{\frac{\DeltaR-\frac{12}{13\sqrt{T}}}{1},\frac{\DeltaR-\frac{12}{13\sqrt{T}}}{(13\DeltaC\sqrt{T})^{-1}}}\geq \min\cbrm[\big]{\DeltaR-\frac{12}{13\sqrt{T}},13\DeltaR\DeltaC\sqrt{T}-12\DeltaC}\geq -\frac{12}{13}$ which implies $K-\DeltaR>-1$, $K\eps=\DeltaR-\frac{12}{13\sqrt T}<\DeltaR$, and $\delta=\frac{12}{13\sqrt T}+\min\rbrm[\big]{\frac{1}{13 \sqrt{T}},\DeltaC}\leq\frac{1}{\sqrt T}$. Therefore, \eqref{eq:psmr-lb} expands to $\psmr_T\geq \frac{T}{12}\rbrm[\big]{\frac{12}{13\sqrt T}-\frac{11}{13\DeltaC\sqrt T}\DeltaC}\geq \frac1{156}\sqrt T$.
\end{itemize}
Together, we can prove the final lower bound $\psmr_T\geq \frac{1}{156}\min\cbrm[\big]{\frac{1}{13\DRmin\DCmin},\sqrt{T}}$. \qed

\section{Proof of \Cref{thm:pureucb}}\label{h1a:proof-pureucb}

In this section, we give a formal proof of our regret analysis of \PureUCB.

Let $x^*=\argmax_{x\in \calX} \min_{y\in \calY} u(x,y)$ and $y^* = \argmin_{y\in \calY} u(x^*,y)$ with ties broken arbitrarily. We know that $\vStar=u(x^*,y^*)$.

In each round, we know that $r_t$ is a noisy observation of $u(x!t,y!t)$. \Cref{code:ucb-n,code:ucb-sum,code:ucb-confidence} of \Cref{alg:pure-ucb} uses these observations to build a mean estimator for $u(x,y)$ and a confidence bound. Like the usual analysis of the UCB algorithm in stochastic MAB, we start by showing that $U!t$ is indeed an upper confidence bound on $u$, by showing the following lemma. We show its proof for completeness and for the extra details needed to handle the potential stochastic nature of the adversary.
\begin{lemma}\label{lem:ucb-ucb}
    Fix $(x,y)\in \calX\times \calY$. With probability at least $1-\delta$, the confidence bound $\absm[\Big]{u(x,y) - \frac{R!{t}(x,y)}{N!{t}(x,y)}}\leq \sqrt{\frac{4\log (1/\delta)+2\log (1+N!{t}(x,y))}{N!{t}(x,y)}}$ holds simultaneously for all $t$ such that $N!t(x,y)>0$.
\end{lemma}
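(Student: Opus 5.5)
The plan is to reduce the adaptive-adversary setting to a standard self-normalized martingale concentration statement for a single pair $(x,y)$. I would proceed in four steps.

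\emph{Step 1: index by visits.} Fix $(x,y)$. Let $\tau_k$ be the (random) round of the $k$-th visit to $(x,y)$, that is, the $k$-th round $t$ with $(x!t,y!t)=(x,y)$. Define $Z_k = r_{\tau_k}-u(x,y)=\eps_{\tau_k}$ whenever $\tau_k<\infty$. Each $\tau_k$ is a stopping time with respect to the filtration $\calF_t$. This filtration is generated by the history together with the current actions; the key requirement is that the event $\{(x!t,y!t)=(x,y)\}$ is determined before $\eps_t$ is revealed. This holds because both the learner and the (possibly randomized, adaptive) adversary choose their actions based only on past information plus independent internal randomness, which we fold into the filtration.

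\emph{Step 2: martingale structure.} Given $\calG_k\defeq\calF_{\tau_k}$ (stopped $\sigma$-algebra, including the actions at $\tau_k$), the noise has $\EE[Z_k\mid \calG_{k-1}, \text{actions at }\tau_k]=0$ and $Z_k\in[-2,2]$. Hence $Z_k$ is conditionally $1$-sub-Gaussian (Hoeffding's lemma on an interval of length at most $2$, since $r_t\in[-1,1]$). On the event $\tau_k=\infty$ we set $Z_k=0$, so the sequence remains a martingale difference sequence with respect to $\calG_k$. I expect this to be the main obstacle: verifying carefully that optional sampling preserves the zero-mean, sub-Gaussian property despite the adversary's adaptivity.

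\emph{Step 3: uniform-in-time concentration.} Apply a uniform concentration bound for $S_n=\sum_{k\le n}Z_k$. The cleanest route is the self-normalized bound of Abbasi-Yadkori et al.\ in dimension one, with regularizer $\lambda=1$. It states that with probability at least $1-\delta$, for all $n\ge0$ simultaneously,
\[
S_n^2 \le 2\sigma^2(n+1)\log\frac{\sqrt{n+1}}{\delta}.
\]
Dividing by $n$ and using $(n+1)/n\le2$ for $n\ge1$ gives
\[
|S_n|/n\le\sqrt{\frac{4\log(1/\delta)+2\log(1+n)}{n}}
\]
with $\sigma=1$, which matches the stated radius exactly. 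An alternative would be a peeling argument combined with a union bound over $n$, but the self-normalized bound yields the stated constants directly.

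\emph{Step 4: translate back.} For every $t$ with $N!t(x,y)=n>0$, we have $R!t(x,y)/N!t(x,y)-u(x,y)=S_n/n$. Since the bound in Step 3 holds for all $n$ simultaneously, it holds simultaneously for all $t$, which proves the lemma.
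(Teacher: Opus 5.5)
Your proof is correct, and its concentration step matches the paper's. The one-dimensional self-normalized bound of Abbasi-Yadkori et al.\ with $\lambda=1$ is what the paper derives by hand: the $\mathcal{N}(0,1)$-mixture $\bar Z_t=\frac{1}{\sqrt{1+N_t}}\exp\bigl(\frac{M_t^2}{2(1+N_t)}\bigr)$ of exponential supermartingales, followed by Ville's inequality. Your final algebra, including $(n+1)/n\le 2$, is the same, so the constants agree. Your Hoeffding's-lemma justification of conditional $1$-sub-Gaussianity spells out a detail the paper only asserts. The relevant fact is that, given the actions, $Z_k$ lies in $[-1-u(x,y),\,1-u(x,y)]$, an interval of length $2$. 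The range $[-2,2]$ alone would only give $2$-sub-Gaussianity.

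The difference is in how adaptivity is handled. You re-index by visits through the stopping times $\tau_k$, which then requires an optional-skipping argument to show that $(\varepsilon_{\tau_k})_k$ is still a conditionally $1$-sub-Gaussian martingale difference sequence. That argument does go through: for an event $E$ in $\mathcal{F}_{\tau_{k-1}}$, split over $\{\tau_k=t\}$, note that $E\cap\{\tau_k=t\}$ is determined before $\varepsilon_t$ is drawn, and use the tower property. However, you left it unresolved as the ``main obstacle.''

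The paper avoids this step by staying in round time. It sets $M_t=R_t(x,y)-N_t(x,y)\,u(x,y)$, and the counter $N_t$ is fixed before $\varepsilon_t$ is revealed. So $\exp(\lambda M_t-\frac12\lambda^2 N_t)$ is a supermartingale by a two-case check: either the pair is played at round $t$ or it is not. Adaptivity of the adversary then costs nothing. You could equally apply the Abbasi-Yadkori bound directly in round time. Take the scalar feature to be the indicator that $(x,y)$ is played at round $t$ (a predictable quantity) and the noise to be $\varepsilon_t$. This gives $\bar V_t=1+N_t$ and $S_t=M_t$ with no stopping times at all.

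Your route lets you cite an off-the-shelf theorem. The paper's is self-contained and makes the adaptivity issue a one-line check.
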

\begin{proof}
    Denote as a shorthand $N_t=N!t(x,y)$ and $I_t=N_t-N_{t-1}$. Let $\calF_{t-1}$ be the $\sigma$-algebra generated by everything before $r_t$ is drawn, i.e., $(x!1,y!1,\dots,x!t,y!t,r_1,\dots,r_{t-1})$. By the UCB algorithm, $I_t$ and thus $N_t$ is measurable on $\calF_{t-1}$.
    
    By the definition of the noise model, the stochastic process $M=\{R!t(x,y)-N!t(x,y) u(x,y)\}_t$ is a martingale. Its difference sequence $\{M_t-M_{t-1}\}$ is conditionally 1-subgaussian when $(x!t,y!t)=(x,y)$, and is equal to 0 otherwise. Let $Z_t(\lambda)=\exp(\lambda M_t - \frac12 \lambda^2 N_t)$. Conditional on $\calF_{t-1}$, if $I_t=0$, then $Z_t(\lambda)=Z_{t-1}(\lambda)$; otherwise $Z_t(\lambda)=Z_{t-1}(\lambda)\exp(\lambda \eps_t - \frac12 \lambda^2)$. Combining both cases leads to $\EE\sbr{Z_t(\lambda)\mid\calF_{t-1}}\leq Z_{t-1}(\lambda)$, and thus $\{Z_t(\lambda)\}_t$ is a supermartingale.

    By the method of mixtures, $\Zbar_t=\int_\dR Z_t(\lambda) d\Phi(\lambda)$ is also a supermartingale, where $\Phi$ is the CDF of $\calN(0, 1)$. This integration evaluates to
    \begin{equation*}
        \Zbar_t=\int_\dR \exp(\lambda M_t - \frac12 \lambda^2 N_t) \frac{1}{\sqrt{2\pi}}\exp(-\frac 12 \lambda^2) d\lambda = \frac{1}{\sqrt{1+N_t}} \exp\rbrm[\Big]{\frac{M_t^2}{2(1+N_t)}}.
    \end{equation*}

    Since $Z_0(\lambda)=1$, we have $\Zbar_0=1$, and we can use Ville's inequality to get
    \begin{equation*}
        \PP\sbrm[\big]{\exists t: \Zbar_t\geq \frac{1}{\delta}}\leq \delta.
    \end{equation*}
    Plugging in, taking logarithms on both sides, and taking the complement on the event yields that with probability at least $1-\delta$, for all $t$,
    \begin{equation*}
        -\frac12 \log(1+N_t) + \frac{M_t^2}{2(1+N_t)} \leq \log(1/\delta),
    \end{equation*}
    which, when $N_t>0$, simplifies to
    \begin{equation*}
        \frac{|M_t|}{N_t}\leq \sqrt{\frac{2(1+N_t)}{N_t^2}\rbrm[\big]{\log(1/\delta)+\frac12 \log(1+N_t)}}.
    \end{equation*}
    Loosely bounding $\frac{1+N_t}{N_t}$ by 2 and plugging in our shorthands, we see that
    \begin{equation*}
        \frac{\absm[\big]{R!t(x,y)-N!t(x,y) u(x,y)}}{N!t(x,y)} \leq \sqrt{\frac{2}{N!t(x,y)}\rbrm[\big]{2\log(1/\delta)+\log(1+N!t(x,y))}},
    \end{equation*}
    which is equivalent to the desired result, up to algebraic simplification.
\end{proof}

Back to \Cref{thm:pureucb}. We first assume the high-probability event of \Cref{lem:ucb-ucb} happens for all action pairs, which implies $u(x,y)\leq U!t(x,y)$ for all $x$, $y$, and $t$. Using the confidence bound on the pair $(x^*,y)$, we see that
\begin{align*}
    \vStar & = u(x^*,y^*) 
    = \min_{y\in \calY} u(x^*,y)
    \leq \min_{y\in \calY} U!{t}(x^*,y) \\
    & \leq \min_{y\in \calY} U!{t}(x!t,y) \tag{$x!t$ maximizes $x\mapsto \min_{y\in \calY}U!{t}(x,y)$}  \\
    & \leq U!{t}(x!t,y!t). \yestag\label{eq:played-ucb-geq-vstar}
\end{align*}

Consider each action pair $(x,y)$ such that $\Delta_{xy}>0$. At the last iteration this action pair is played, i.e., at the maximum $t$ such that $(x!t,y!t)=(x,y)$, we know that $U!{t}(x!t,y!t)\geq \vStar=u(x,y)+\Delta_{xy}$, so
\begin{equation*}
    \sqrt{\frac{4\log (1/\delta)+2\log (1+N!{t-1}(x,y))}{N!{t-1}(x,y)}}\geq \Delta_{xy}.
\end{equation*}
which simplifies to $N!{t-1}(x,y)\leq \frac{1}{\Delta_{xy}^2} \rbrm[\big]{4\log (1/\delta)+2\log (1+N!{t-1}(x,y))}$, and can be loosely upper-bounded as $\frac{6}{\Delta_{xy}^2} \log T$. Thus, $N!t(x,y)$ is at most this value plus one. Since this is the last round $(x,y)$ is played, the same bound applies to $N!T(x,y)$.

Therefore, our regret is bounded as
\begin{align*}
    \psmr_T 
    & = \EE\sbrm[\Big]{\sum_{t=1}^T \rbrm[\big]{\vStar-u(x!t,y!t)}}
    = \EE\sbrm[\Big]{\sum_{(x,y)\in \calX\times \calY} N!T(x,y) \rbrm[\big]{\vStar-u(x,y)}} \\
    & = \EE\sbrm[\Big]{\sum_{(x,y)\in \calX\times \calY} N!T(x,y) \Delta_{xy}}
    \leq \EE\sbrm[\Big]{\sum_{(x,y) : \Delta_{xy}>0} N!T(x,y) \Delta_{xy}} \yestag\label{eq:ucb-wc-step}\\
    & \leq \sum_{(x,y) : \Delta_{xy}>0} \rbrm[\Big]{1+\frac{6}{\Delta_{xy}^2} \log T} \Delta_{xy}.
\end{align*}
This directly implies our instance-dependent bound.

To obtain our worst-case bound, we introduce a constant $\Delta>0$. For action pairs with $0<\Delta_{xy}<\Delta$, starting from \eqref{eq:ucb-wc-step}, we bound the total number of plays as $\sum_{(x,y):0<\Delta_{xy}<\Delta} N!T(x,y)\leq T$; for each action pair with $\Delta_{xy}>\Delta$, the regret contributed by such a pair is bounded by $\rbrm[\big]{1+\frac{6}{\Delta_{xy}^2} \log T} \Delta_{xy}\leq 2+\frac{6 \log T}{\Delta_{xy}}\leq 2+ \frac{6 \log T}{\Delta}$. Together, we have
\begin{align*}
    \psmr_T
    & \leq \Delta T + \sum_{(x,y) : \Delta_{xy}>\Delta} \rbrm[\Big]{2+ \frac{6 \log T}{\Delta}} \\
    & \leq \Delta T + m_x m_y \rbrm[\Big]{2+ \frac{6 \log T}{\Delta}} \\
    & = 2 m_x m_y + 2\sqrt{6 m_x m_y \log T},
\end{align*}
where in the last step we substitute in with $\Delta=\sqrt{\frac{6 m_x m_y \log T}{T}}$

Note that both results are conditional on the high-probability event in \Cref{lem:ucb-ucb} holding for all action pairs $(x,y)$, which happens with probability at least $1-m_x m_y \delta$. If the confidence event does not happen, we trivially bound the PSMR by $T$, and when $\delta=\frac{1}{T}$, this low-probability event adds an extra term of $m_x m_y$ to both bounds of PSMR; in both bounds, the extra term is absorbed by the big-$\order$ notation. \qed

\section{Proof of \Cref{thm:tsallis-bilinear}}\label{h1a:proof-tsallis-bilinear}

In this section, we provide a full proof of \Cref{thm:tsallis-bilinear}.

We first specify the exploration distribution $\pzero$ used in the algorithm. To recall, the variance function $S(\cdot)$ is defined as
\begin{equation*}
    S(p)\defeq \EE_{x\sim p}[xx^\trans].
\end{equation*}
Given a set $\calX$, there exists a distribution $p\in \simplex^\calX$ such that $\inner{x}{S(p)^{-1}x}\leq d_x$ for every $x\in \calX$; this is a result known as the Kiefer-Wolfowitz theorem \citep{kiefer1960equivalence}. We choose a $\pzero\in \simplex^\calX$ such that
\begin{equation}
    \forall x\in \calX, \inner{x}{S(\pzero)^{-1}x}\leq c d_x
\end{equation}
for some constant $c=\order(1)$.

As in the proof of \Cref{thm:tsallis}, we use $C=\EE\sbrm[\big]{\sum_{t=1}^T \one[y!t \neq y^*]}$ to denote the expected number of times the adversary deviates from playing $y^*$.
Then, the external regret compared against~$x^*$ is lower bounded as
\begin{align}
    \ER_T(x^*)
    &=
    \EE\left[ \sum_{t=1}^T \left( u(x^*,y!t) - u(x!t,y!t) \right) \right]
    \nonumber \\ 
    &=
    \EE\Bigg[ 
        \sum_{t=1}^T \left( u(x^*,y^*) - u(x!t,y^*) \right) 
        +
        \sum_{t=1}^T \left( u(x^*,y!t) - u(x^*,y^*) \right) 
        \nonumber \\ 
        &\qquad\qquad\qquad+
        \sum_{t=1}^T \left( u(x!t,y^*) - u(x!t,y!t) \right) 
    \Bigg]
    \nonumber \\
    &
    =
    \EE\left[ 
        \sum_{t=1}^T \langle p_x!t, \DeltaR_{x} \rangle
        +
        \sum_{t=1}^T \langle p_y!t, \DeltaC_{y} \rangle
        -
        \sum_{t=1}^T \left( u(x!t,y^*) - u(x!t,y!t) \right) 
    \right]
    \nonumber \\ 
    &
    \geq
    \EE\left[ 
        \sum_{t=1}^T \langle p_x!t, \DeltaR_{x} \rangle
        -
        2 C
    \right]
    ,
    \label{eq:ereg_lower_bilinear}
\end{align}
where the last inequality follows from the definition of $C$ and the assumption that $\|x\|_2 \leq 1$ and $\|y\|_2 \leq 1$ for any $x \in \mathcal{X}$ and $y \in \mathcal{Y}$.

We then upper bound the external regret by using the upper bound of \TsallisSPM{} \citep{ito2024adaptive}\footnote{The arXiv and published versions of \citet{ito2024adaptive} have different numberings for their theorems and equations. We use the published version to avoid confusion.} instead of using the Tsallis-INF upper bound.
From  Eq.~(26) and the choice of parameters in Section 4.3 (see discussion around Eqs.~(35) and (36)) in \citet{ito2024adaptive}, 
the external regret is upper bounded as
\begin{align}
    &
    \ER_T(x^*)
    \nonumber \\ 
    &\leq
    O\rbr{
    \EE\sbr{
        \min\left\{
        \sqrt{\frac{d_x m_x^{1-\alpha} T}{\alpha(1-\alpha)}}
        ,\,
        \sqrt{ 
            \frac{d_x}{\alpha(1-\alpha)} \frac{m_x^{1-\alpha}}{\DRmin} 
            \sum_{t=1}^T \langle p_x!t, \DeltaR_{x} \rangle
            \log T
        }
        \right\}
    }
    \!+\! m_x \log m_x
    }
    .
    \label{eq:regret_alpha_tsallis}
\end{align}

The first $\min$ in \eqref{eq:regret_alpha_tsallis} with $\alpha=1-\frac{1}{4\log m_x}$ yields
$\ER_T(x^*) \leq O(\sqrt{T d_x \log m_x} + m_x \log m_x)$.
Hence, we obtain
$\psmr_T \leq \ER_T(x^*) \leq O(\sqrt{T d_x \log m_x} + m_x \log m_x)$.
If the game has no PSNEs, similar to \Cref{thm:tsallis}, the $\DeltaMix$-dependent bound is a direct implication of this worst-case bound, the fact that $\psmr_T\leq \ER_T-\DeltaMix T$, and \Cref{lem:sqrt-func}.

We next focus on the games with a strict PSNE.
From the second branch in the minimum in \eqref{eq:regret_alpha_tsallis} combined with Jensen's inequality and \eqref{eq:ereg_lower_bilinear}, we also have
\begin{equation*}
    \ER_T(x^*)
    \leq
    O\rbr{
    \sqrt{ 
        \frac{d_x}{\alpha(1-\alpha)} \frac{m_x^{1-\alpha}}{\DRmin} 
        \left( \ER_T(x^*) + 2 C \right)
        \log T
    }
    + m_x \log m_x
    }
    .
\end{equation*}
Plugging $\alpha=1-\frac{1}{4\log m_x}$ in the last inequality and applying \Cref{lem:self-bound},
we obtain
\begin{equation}
    \ER_T(x^*)
    \leq
    O\left(
    \frac{d_x \log m_x \log T}{\DRmin}
    +
    \sqrt{\frac{C d_x \log m_x \log T}{\DRmin}}
    +
    m_x \log m_x
    \right)
    .
    \nonumber
\end{equation}
Since we have $\psmr_T \leq \ER_T(x^*) - \DCmin C$ (from the same analysis as in \eqref{eq:er-nr-difference}), combining this with the last inequality yields
\begin{align*}
    \psmr_T 
    &
    \leq
    \ER_T(x^*) - \DCmin C
    \nonumber \\
    &\leq
    O\left(
    \frac{d_x \log m_x \log T}{\DRmin}
    +
    \sqrt{\frac{C d_x \log m_x \log T}{\DRmin}}
    +
    m_x \log m_x
    \right)
    - 
    \DCmin C
    \\
    &=
    O\left(
    \frac{d_x \log m_x \log T}{\DRmin}
    +
    \frac{d_x \log m_x \log T}{\DRmin \DCmin}
    +
    m_x \log m_x
    \right)
    ,
\end{align*}
where in the last inequality we considered the worst-case with respect to $C$ (recall \Cref{lem:sqrt-func}).
This completes the proof of \Cref{thm:tsallis-bilinear}.

\section{Informed Learning in Bilinear Games}\label{h1a:proof-purelinucb}

In this section, we detail our technical results in informed feedback learning in bilinear games, briefly mentioned in \Cref{h2:linucb}. We extend \PureUCB{} and propose \PureLinUCB{}, extending the classical LinUCB algorithm~\citep{abbasi-yadkori2011improved}; see \Cref{alg:pure-lin-ucb}.

\begin{algorithm}[h]
\caption{\PureLinUCB}\label{alg:pure-lin-ucb}
\begin{algorithmic}[1]
\Require A regularization coefficient $\lambda\in \dR_+$. A sequence of confidence radius multipliers $\beta_1, \beta_2, \dots\in \dR^+$.
\State Initialize $V!0=\lambda I_{d_x d_y}\in \dR^{(d_x d_y)\times (d_x d_y)}$, $b!0=\mathbf{0}\in \dR^{d_x d_y}$. \label{code:ridge-init}
\For{$t=1,2,\dots$}
  \State Compute $\Ahat!t=\mat_{d_x\times d_y}(\rbrm{V!{t-1}}^{-1} b!{t-1})$. \label{line:ridge}
  \State Define the UCB for any pair $(x,y)$ as\\
  \centerline{$U!t(x,y)=\anglem[\big]{x,\Ahat!t y}+\beta_t \norm{\vect (x y^\trans)}_{\rbrm{V!{t-1}}^{-1}}.$}\label{code:U-def}
  \State Let  $x!t = \argmax_{x\in \calX} \min_{y\in \calY} U!t(x,y)$ with ties broken arbitrarily.
  \label{line:pure_maximin}
  \State Play $x!t$. Observe the adversarial action $y!t$ and the reward $r_t$.
  \State Let $a!t = \vect(x!t \rbrm{y!t}^\trans)$.
  \State Update the covariance matrix and bias vector for ridge regression: $V!t=V!{t-1}+a!t \rbrm{a!t}^\trans$, $b!t=b!{t-1}+r_t a!t$. \label{code:ridge-update}
\EndFor
\end{algorithmic}
\end{algorithm}

In the definition of this algorithm, we distinguish between $\dR^{d_x\times d_y}$, the set of $d_x\times d_y$ matrices, and $\dR^{d_x d_y}$, vectors of length $d_x d_y$. Let $\vect(\cdot)$ be the ``flatten'' map that turns matrices to vectors of length $d_x d_y$, and let $\mat_{d_x\times d_y}(\cdot)$ be its inverse, turning vectors into $d_x\times d_y$ matrices.

The algorithm is based on the observation that the reward of a bilinear game is linear in its matrix $A$: $u(x,y)=\inner{x}{Ay}=\inner{\vect(A)}{\vect(x y^\trans)}$. The feedback $r_t$ in each round is thus a noisy linear projection of $\vect(A)$. 
The algorithm thus maintains a ridge regression estimate for $\vect(A)$ (\Cref{line:ridge} and \Cref{code:ridge-update}) and utilizes a LinUCB-style confidence bound to determine an optimistic reward estimate for each action pair (\Cref{code:U-def}). 
Finally, the algorithm again plays the pure maximin action based on the upper confidence bounds (\Cref{line:pure_maximin}), analogous to \PureUCB.

Recall that we define the instance-dependent parameter $\DeltaLin>0$ be such that $\vStar - u(x,y)\in (-\infty,0]\cup [\DeltaLin, +\infty)$.
We present the formal version of \Cref{thm:purelinucb-informal}:

\begin{theorem}\label{thm:purelinucb}
    \PureLinUCB{} achieves a worst-case regret of $\psmr_T=\order\rbrm[\big]{d_x d_y \sqrt{T} \log T}$ and an instance-dependent regret of $\psmr_T=\order\rbrm[\big]{\frac{1}{\DeltaLin}d_x^2 d_y^2 \log^2 T}$ simultaneously, against any adaptive adversary, by letting $\beta_t=\sqrt{\lambda d_x d_y}+\sqrt{2\log \delta^{-1}+d_x d_y \log \rbrm[\big]{1+\frac{t}{d_x d_y\lambda}}}$, $\delta=\frac{1}{T}$, and $\lambda=1$.
\end{theorem}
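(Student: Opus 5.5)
The proof will follow the structure of the proof of \Cref{thm:pureucb}, with the per-pair confidence intervals replaced by the self-normalized confidence ellipsoid of \citet{abbasi-yadkori2011improved} and the pair counting replaced by the elliptical potential lemma. Write $\theta=\vect(A)$ and $a!t=\vect(x!t(y!t)^\trans)$, so that $r_t=\inner{\theta}{a!t}+\eps_t$. Here $\normm{a!t}_2=\normm{x!t}_2\normm{y!t}_2\le 1$ and $\normm{\theta}_2\le\normm{A}_F\le\sqrt{\min(d_x,d_y)}\normm{A}_2\le\sqrt{d_xd_y}$.

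\textbf{Step 1 (confidence).} The noise is conditionally $1$-subgaussian given the history and the current actions, and $a!t$ is measurable with respect to that history (including the adaptive adversary's choice). The self-normalized bound of \citet{abbasi-yadkori2011improved} then gives, with probability $1-\delta$ and simultaneously for all $t$,
\[
\normm{\vect(\Ahat!t)-\theta}_{V!{t-1}}\le\beta_t .
\]
By Cauchy--Schwarz this yields, for every pair $(x,y)$,
\[
u(x,y)\le U!t(x,y)\le u(x,y)+2\beta_t\normm{\vect(xy^\trans)}_{(V!{t-1})^{-1}} .
\]

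\textbf{Step 2 (optimism).} Repeat the chain \eqref{eq:played-ucb-geq-vstar} verbatim, using only the lower inequality, to get $\vStar\le U!t(x!t,y!t)$. Combined with the upper inequality, the per-round gap satisfies
\[
\Delta_t\defeq\vStar-u(x!t,y!t)\le 2\beta_t\normm{a!t}_{(V!{t-1})^{-1}} .
\]
Since also $\Delta_t\le 2$, this gives $\Delta_t\le 2\beta_T\min\{1,\normm{a!t}_{(V!{t-1})^{-1}}\}$, because $\beta_T\ge1$.

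\textbf{Step 3 (summation).} The elliptical potential lemma in dimension $d=d_xd_y$ with $\lambda=1$ gives
\[
\sum_{t\le T}\min\{1,\normm{a!t}^2_{(V!{t-1})^{-1}}\}\le 2d\log(1+T/d).
\]
We also have $\beta_T=\order(\sqrt{d\log T})$ when $\delta=1/T$.

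For the \emph{worst case}, Cauchy--Schwarz gives
\[
\sum_t\Delta_t\le 2\beta_T\sqrt{T\cdot 2d\log(1+T/d)}=\order(d\sqrt{T}\log T)=\order(d_xd_y\sqrt T\log T).
\]
We only need to upper-bound the sum of positive terms, and negative $\Delta_t$ only help.

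For the \emph{instance-dependent} bound, use the gap structure: every $t$ with $\Delta_t>0$ has $\Delta_t\ge\DeltaLin$. For such $t$, $\Delta_t\le\Delta_t^2/\DeltaLin\le 4\beta_T^2\min\{1,\normm{a!t}^2_{(V!{t-1})^{-1}}\}/\DeltaLin$. Summing gives
\[
\sum_t\Delta_t\le\sum_{t:\Delta_t>0}\Delta_t\le\frac{4\beta_T^2\cdot 2d\log(1+T/d)}{\DeltaLin}=\order\Bigl(\frac{d^2\log^2T}{\DeltaLin}\Bigr),
\]
which is $\order\bigl(d_x^2d_y^2\log^2T/\DeltaLin\bigr)$. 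Both bounds hold on the same good event, so they hold simultaneously.

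\textbf{Step 4.} On the failure event, of probability at most $\delta=1/T$, bound the PSMR trivially by $2T$. This adds $\order(1)$ to both bounds.

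The main obstacle is Step 1 under an adaptive adversary. One has to check that the filtration used in the self-normalized martingale bound makes $a!t$ predictable, i.e.\ that $y!t$ is fixed before $\eps_t$ is revealed. One also has to check that the ridge prior term $\sqrt\lambda\normm{\theta}_2\le\sqrt{d_xd_y}$ matches the first summand of $\beta_t$. Everything else is the standard LinUCB argument.
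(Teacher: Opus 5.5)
Your proposal is correct and follows the paper's proof essentially step for step. The steps match: the Abbasi-Yadkori confidence ellipsoid, the optimism chain giving $\Delta_t \le 2\beta_T \|a^t\|_{(V^{t-1})^{-1}}$, Cauchy--Schwarz for the worst case, $\Delta_t \le \Delta_t^2/\DeltaLin$ for the instance-dependent bound, and $\order(1)$ from the failure event. The only differences are cosmetic. You use the truncated elliptical potential lemma (justified by $\Delta_t \le 2 \le 2\beta_T$), whereas the paper bounds $\sum_t \|a^t\|^2_{(V^{t-1})^{-1}}$ directly via concavity of $\log\det$ plus a $d_x d_y/\lambda$ correction term. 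You also index the confidence ellipsoid by $V^{t-1}$, which matches the algorithm's definition of $U^t$ more cleanly than the paper's indexing.
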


\begin{proof}
Let $x^*=\argmax_{x\in \calX} \min_{y\in \calY} u(x,y)$ and $y^* = \argmin_{y\in \calY} u(x^*,y)$ with ties broken arbitrarily. Define $a^*=\vect(x^* (y^*)^\trans)$. We also know that $\vStar=u(x^*,y^*)=\inner{\vect(A)}{a^*}$.

In each round, we know that $r_t$ is a noisy linear observation on $\vect(A)$, or formally, $r_t=\inner{\vect(A)}{a!t}+\eps_t$. \Cref{code:ridge-init,code:ridge-update} of \Cref{alg:pure-lin-ucb} uses these observations to build a ridge regression on $\vect(A)$, where $\Ahat!t$ is the regularized least-square estimator for $A$. The usual analysis of LinUCB, e.g. Theorem 2 of \cite{abbasi-yadkori2011improved}, is applicable here and states that $A$ is within a confidence ellipsoid centered at $\Ahat!t$ with high probability, captured by the following lemma.

\begin{lemma}[Confidence ellipsoid]\label{lem:conf-ellipsoid}
    For any fixed parameters $\lambda>0$ and $\delta\in (0, 1)$, we have for probability at least $1-\delta$, $A\in \confset_t$ for all $t=1,2,\dots$, where
    \begin{equation}
        \confset_t=\cbr{A'\in \dR^{d_x\times d_y}\mid \normm[\big]{\vect(A')-\vect(\Ahat!t)}_{V!t}\leq \beta_t},
    \end{equation}
    in which $V!t=\lambda I + \sum_{s=1}^t a!s \rbrm[\big]{a!s}^\trans$ is symmetric and positive definite, $b!t=\sum_{s=1}^t r_s a!s$, $\Ahat!t=\mat\rbrm[\big]{(V!t)^{-1}b!t}$, and $\beta_t=\sqrt{\lambda d_x d_y} +\sqrt{2\log\rbrm[\big]{\frac{1}{\delta}}+d_x d_y \log \rbrm[\big]{1+\frac{t}{d_x d_y\lambda}}}$.
\end{lemma}

Note that the sequence $\{\beta_t\}$ is monotonically increasing. Our form of the lemma differs slightly from the standard form in \cite{lattimore2020bandit}, as our $\vect(A)$ has a dimension of $d_x d_y$ and its norm is upper-bounded as $\norm{\vect(A)}_2=\norm{A}_F\leq \sqrt{d_x d_y}$, but it is a direct implication of the standard result.

We first assume that the high-probability event in \Cref{lem:conf-ellipsoid} holds. When $A\in \confset_t$, $u(x,y)$ for an arbitrary action pair is upper bounded by
\begin{align*}
    u(x,y) &\leq \max_{A'\in \confset_t} \inner{x}{A'y} \\
    & = \max_{\norm{A'-\Ahat!t}_{V!t}\leq \beta_t} \inner{\vect(x y^\trans)}{\vect(A')} \\
    & = \max_{\norm{\theta'}_2\leq \beta_t} \inner{\vect(x y^\trans)}{\vect(\Ahat!t)+(V!t)^{\sfrac {-1}2}\theta'} \\
    & = \anglem[\big]{x,\Ahat!t y} + \max_{\norm{\theta'}_2\leq \beta_t} \inner{(V!t)^{\sfrac {-1}2}\vect(x y^\trans)}{\theta'} \yestag\label{eq:conf-symmetry}\\
    & = \anglem[\big]{x,\Ahat!t y} + \beta_t \norm{\vect(x y^\trans)}_{(V!t)^{-1}} \yestag\label{eq:max-inner-norm} \\
    & = U!{t+1}(x,y).
\end{align*}
In the derivation above, $\theta'$ is defined to be $\sqrt{V!t}\rbrm[\big]{\vect(A')-\vect(\Ahat!t)}$, \eqref{eq:conf-symmetry} is due to $V!t$ being symmetric, \eqref{eq:max-inner-norm} uses $\max_{\norm{a}_2\leq C} \inner{a}{b}=C \norm{b}_2$, and the last step is the definition of $U!{t+1}$ in \Cref{code:U-def} of the algorithm.

With a similar argument, we can also prove that $u(x,y)\geq \anglem[\big]{x,\Ahat!t y} - \beta_t \norm{\vect(x y^\trans)}_{(V!t)^{-1}}$.
This shows that $\beta_t \norm{\vect(x y^\trans)}_{(V!t)^{-1}}$ is a confidence radius of $u(x,y)$.
Together with the definition of $U!{t+1}$, we can imply that $u(x,y)\geq U!{t+1}(x,y)-2\beta_t \norm{\vect(x y^\trans)}_{(V!t)^{-1}}$.

Using the same logic as \eqref{eq:played-ucb-geq-vstar} in our proof of \Cref{thm:pureucb}, we know that $\vStar\leq U!t(x!t,y!t)$ for every round $t$.
The per-round regret can be bounded as
\begin{align*}
    \vStar-u(x!t,y!t)
    & \leq U!t(x!t,y!t)-u(x!t,y!t) \\
    & \leq 2\beta_t \normm[\big]{a!t}_{(V!{t-1})^{-1}} \leq 2 \beta_T \normm[\big]{a!t}_{(V!{t-1})^{-1}}. \yestag\label{eq:linucb-per-round}
\end{align*}
The last step is due to $\{\beta_t\}$ being monotonic. Therefore, to prove a bound on the total PSMR, we only need to upper-bound the sum of the RHS.

The rest of the proof is purely algebraic and mostly identical to that of LinUCB.

We focus on the worst-case regret first. First, use Cauchy-Schwarz to obtain
\[
    \sum_{t=1}^T \normm[\big]{a!t}_{(V!{t-1})^{-1}}\leq \sqrt{T \sum_{t=1}^T \normm[\big]{a!t}^2_{(V!{t-1})^{-1}}}.
\]
The summand in the RHS is similar to $\normm[\big]{a!t}^2_{(V!{t})^{-1}}$, which can be bounded as
\[
    \normm[\big]{a!t}^2_{(V!{t})^{-1}} 
    = \anglem[\big]{(V!{t})^{-1},a!t (a!t)^\trans} 
    = \anglem[\big]{(V!{t})^{-1},V!t - V!{t-1}}
    \leq \ln \det V!t - \ln \det V!{t-1},
\]
where the last step uses the concavity of the matrix function $M\mapsto \ln\det(M)$, the gradient of which is exactly $M^{-1}$.

By a telescoping argument, $\sum_{t=1}^T \normm[\big]{a!t}^2_{(V!{t})^{-1}}\leq \ln\det V!T-\ln\det V!0$. The initial value is known as $\ln\det V!0=d_x d_y \ln\lambda$. The final value can be bounded with the AM-GM inequality over its eigenvalues,
\begin{align*}
    \det V!T
    & \leq \rbrm[\bigg]{\frac{\trace(V!T)}{d_x d_y}}^{d_x d_y}
    = \rbrm[\bigg]{\frac{\lambda d_x d_y + \sum_{t=1}^{T} \trace\rbrm[\big]{a!t (a!t)^\trans}}{d_x d_y}}^{d_x d_y} \\
    & = \rbrm[\bigg]{\lambda + \frac{\sum_{t=1}^{T} (a!t)^\trans a!t}{d_x d_y}}^{d_x d_y}
    \leq \rbrm[\bigg]{\lambda + \frac{T}{d_x d_y}}^{d_x d_y},
\end{align*}
where the last step is due to $\normm[\big]{a!t}_2=\normm[\big]{x!t (y!t)^\trans}_F=\normm[\big]{x!t}_2\normm[\big]{y!t}_2\leq 1$. Subtracting the initial and final $\ln\det$ values, we have $\sum_{t=1}^T \normm[\big]{a!t}^2_{(V!{t})^{-1}}\leq d_x d_y \ln \rbrm[\big]{1+\frac{T}{\lambda d_x d_y}}$.

The formula we hope to bound, $\sum_{t=1}^T \normm[\big]{a!t}^2_{(V!{t-1})^{-1}}$, is very close to this value:
\begin{align*}
    \sum_{t=1}^T \normm[\big]{a!t}^2_{(V!{t-1})^{-1}}-\sum_{t=1}^T \normm[\big]{a!t}^2_{(V!{t})^{-1}}
    & = \sum_{t=1}^T \anglem[\big]{(V!{t-1})^{-1}-(V!t)^{-1},a!t (a!t)^\trans} \\
    & \leq \sum_{t=1}^T \anglem[\big]{(V!{t-1})^{-1}-(V!t)^{-1},I} \\
    & = \anglem[\big]{(V!{0})^{-1}-(V!T)^{-1},I} \\
    & \leq \anglem[\big]{(V!{0})^{-1},I}=\frac{d_x d_y}{\lambda},
\end{align*}
where we used the facts that matrices $(V!{t-1})^{-1}-(V!t)^{-1}$ and $V!T$ are positive semidefinite, and that $\normm[\big]{a!t}_2\leq 1$.

Combining all these results proves
\begin{align}
    \psmr_T
    \leq 2\beta_T \sqrt{T \rbrm[\Big]{d_x d_y \ln \rbrm[\big]{1+\frac{T}{\lambda d_x d_y}}+\frac{d_x d_y}{\lambda}}}.\label{eq:linucb-psmr-worst-case}
\end{align}

For the instance-dependent regret, notice that the gap parameter guarantees that $\vStar-u(x!t,y!t)\not\in (0,\DeltaLin)$, thus
\begin{align*}
    \vStar-u(x!t,y!t)
    \leq \frac{1}{\DeltaLin}\rbrm[\big]{\vStar-u(x!t,y!t)}^2
    \leq \frac{4}{\DeltaLin} \beta_T^2 \normm[\big]{a!t}_{(V!{t-1})^{-1}}^2.
\end{align*}
We have already proved that
\begin{align*}
    \sum_{t=1}^T \normm[\big]{a!t}_{(V!{t-1})^{-1}}^2
    \leq
    d_x d_y \ln \rbrm[\big]{1+\frac{T}{\lambda d_x d_y}}+\frac{d_x d_y}{\lambda}.
\end{align*}
Combining it with previous results, and we can prove that
\begin{equation}
    \psmr_T
    \leq \frac{4}{\DeltaLin}\beta_T^2 \rbrm[\Big]{d_x d_y \ln \rbrm[\big]{1+\frac{T}{\lambda d_x d_y}}+\frac{d_x d_y}{\lambda}}.\label{eq:linucb-psmr-instance-dep}
\end{equation}

However, \eqref{eq:linucb-psmr-worst-case} and \eqref{eq:linucb-psmr-instance-dep} hold only when the high-probability event of \Cref{lem:conf-ellipsoid} happens. By setting $\delta=\frac{1}{T}$ and trivially bounding the regret by $T$ when the high-probability event does not hold, the expected PSMR increases by at most 1. By letting $\lambda=1$, assuming $T\geq 3$ and $d_x d_y\geq 2$, and simplifying with $\log \rbrm[\big]{1+\frac{T}{d_x d_y\lambda}}\leq \log T$, we arrive at our final bounds,
\begin{align*}
    \psmr_T & \leq 17 d_x d_y \sqrt{T} \ln{T}, \\
    \psmr_T & \leq \frac{49}{\DeltaLin}d_x^2 d_y^2 \ln^2 T. \mqed
\end{align*}
\let\jmlrQED\relax
\end{proof}

\end{document}